\documentclass[conference]{IEEEtran}
\IEEEoverridecommandlockouts
\usepackage{cite}
\usepackage{amsmath,amssymb,amsfonts}
\usepackage{amsthm}
\usepackage{rotating}
\usepackage{algorithm}
\usepackage{algpseudocode}
\usepackage{threeparttable}
\usepackage{booktabs}
\usepackage{graphicx}
\usepackage{textcomp}
\usepackage{xcolor}
\usepackage{multirow}
\usepackage{pifont}
\usepackage[normalem]{ulem} 

\makeatletter 
\newcommand\redsout{\bgroup\markoverwith{\textcolor{red}{\rule[0.5ex]{2pt}{0.4pt}}}\ULon}
\makeatother

\newcommand{\cmark}{\ding{51}}  
\newcommand{\xmark}{\ding{55}}  
\newtheorem{theorem}{Theorem}
\newtheorem{definition}{Definition}
\newtheorem{proposition}{Proposition}
\newtheorem{corollary}{Corollary}
\newtheorem{remark}{Remark}
\usepackage{subcaption}

\usepackage{flushend}
\usepackage{balance}

\begin{document}

\title{{\footnotesize This is the preprint of the work
has been accepted for publication at the IEEE Conference on Secure and
Trustworthy Machine Learning (SaTML 2026).}\newline Cascading Robustness Verification: Toward Efficient Model‑Agnostic Certification

}

\author{
\begin{tabular}{cc}
\begin{tabular}{c}
\textbf{Mohammadreza Maleki} \\
\textit{Electrical, Computer, and Biomedical Engineering} \\
\textit{Toronto Metropolitan University} \\
Toronto, ON, Canada \\
ORCID: 0000-0003-2871-8797
\end{tabular}
&
\begin{tabular}{c}
\textbf{Rushendra Sidibomma} \\
\textit{Computer Science \& Engineering} \\
\textit{University of Minnesota Twin-Cities} \\
Minneapolis, MN, USA \\
ORCID: 0009-0009-5169-7569
\end{tabular}
\\[4em]
\begin{tabular}{c}
\textbf{Arman Adibi} \\
\textit{Computer and Cyber Sciences} \\
\textit{Augusta University} \\
Augusta, GA, USA \\
ORCID: 0000-0002-0417-8506
\end{tabular}
&
\begin{tabular}{c}
\textbf{Reza Samavi} \\
\textit{Electrical, Computer, and Biomedical Engineering} \\
\textit{Toronto Metropolitan University, Vector Institute} \\
Toronto, ON, Canada \\
ORCID: 0000-0001-6768-0168
\end{tabular}
\end{tabular}
}

 \maketitle

\begin{abstract}
Certifying the robustness of neural networks (NNs) against adversarial examples remains a major challenge in trustworthy machine learning. Providing formal guarantees that inputs remain robust against all adversarial attacks within a perturbation budget often requires solving non-convex optimization problems. Hence, incomplete verifiers, such as those based on linear programming (LP) or semidefinite programming (SDP), are widely used because they scale efficiently and substantially reduce the cost of robustness verification compared to complete methods. We identify the limitations of relying on a single incomplete verifier, which can underestimate robustness due to \textit{false negatives} arising from loose approximations or \textit{misalignment} between training and verification methods. In this work, we propose \textit{Cascading Robustness Verification (CRV)}, which goes beyond an engineering improvement by exposing fundamental limitations of existing robustness metric and introducing a framework that enhances both reliability and efficiency in verification. CRV is a model-agnostic verifier, meaning that its robustness guarantees are independent of the model's training process. The key insight behind the CRV framework is that when using multiple verification methods, an input is certifiably robust as long as one method verifies the input as robust. Rather than relying solely on a single verifier with a fixed constraint set, CRV progressively applies multiple verifiers to balance the tightness of the bound and computational cost. Starting with the least expensive method, CRV halts as soon as an input is certified as robust; otherwise, it proceeds to more expensive methods. For each computationally expensive method, we introduce a \textit{Stepwise Relaxation Algorithm (SR)} that incrementally adds constraints and checks for certification at each step, thereby avoiding unnecessary computation. Our theoretical analysis demonstrates that CRV consistently achieves equal or higher verified accuracy across all settings compared to powerful but computationally expensive incomplete verifiers in the cascade, such as SDP-based methods, while significantly reducing verification overhead. Empirical results confirm that CRV certifies at least as many inputs as benchmark approaches, while improving runtime efficiency by up to \(\sim 90\%\).
\end{abstract}

\begin{IEEEkeywords}
Neural Networks, Non-convex Optimization, Robustness, Certified Robustness
\end{IEEEkeywords}

\section{Introduction}
State-of-the-art classifiers can fail catastrophically under imperceptible adversarial perturbations, known as adversarial examples, small input changes that significantly alter a model’s output while remaining invisible to human perception~\cite{pelekis2025adversarial}. This vulnerability limits the deployment of neural networks (NNs) in safety-critical domains such as autonomous driving and healthcare~\cite{madry2017towards},
motivating extensive research into verifying~\cite{fazlyab2020safety,raghunathan2018semidefinite} and improving~\cite{wong2018provable} NNs robustness. Adversarial robustness of NNs is defined as the maximum perturbation a model can withstand while maintaining correct classifications~\cite{NEURIPS2024_f21a76d6}. Empirical defense mechanisms generate adversarial examples (e.g., via PGD~\cite{madry2017towards} or AutoAttack~\cite{croce2020reliable}) and train NNs to classify them correctly. However, these methods are  
often defeated by adaptive attacks, where the attacker tailors the attack specifically to the defense strategy, exploiting its weaknesses~\cite{li2023sok}.
Safety-critical applications~\cite{cao2021invisible} demand formal guarantees of model behavior under bounded perturbations. To provide such guarantees, robustness verification must address the fundamental challenges of non-convexity and high dimensionality in NNs~\cite{ma2024relu}. Existing approaches are broadly categorized as complete or incomplete~\cite{zhang2022general}. Complete methods aim to provide exact robustness guarantees and, given sufficient time, always return a definite ``yes'' or ``no'' answer to the question of whether an input is robust under verification. However, since robustness verification in NNs is generally NP-hard, it is computationally prohibitive to rely on solvers such as Satisfiability Modulo Theories (SMT)~\cite{pulina2012challenging}, which can produce tight certificates.

\begin{figure}[!t]
    \begin{subfigure}{0.24\textwidth}
        \centering
    \includegraphics[scale=0.06]{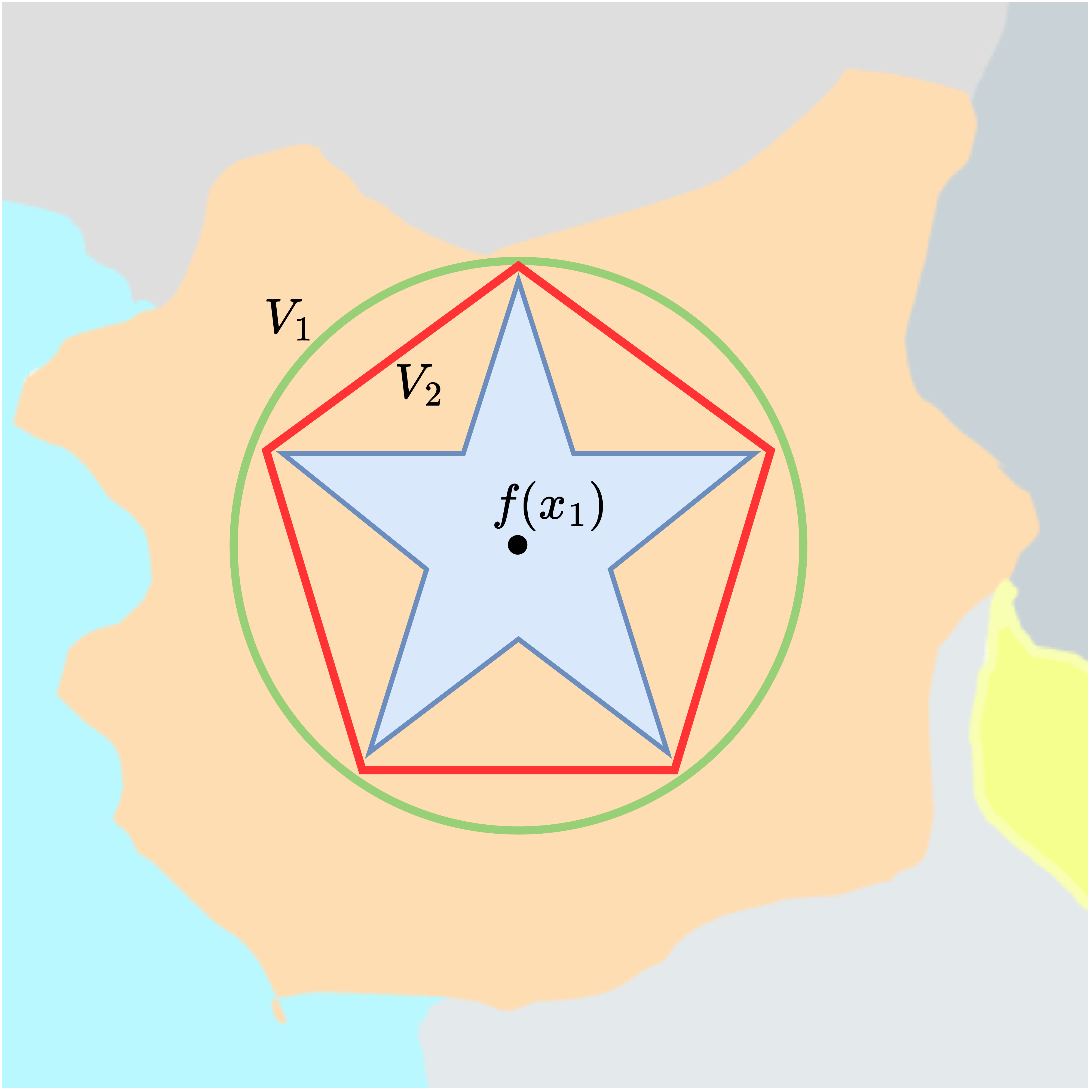} 
        \caption{\(V_1\) \& \(V_2\) agreement} 
        \label{fig:tp}
    \end{subfigure}
    \begin{subfigure}{0.24\textwidth}
        \centering 
    \includegraphics[scale=0.06]{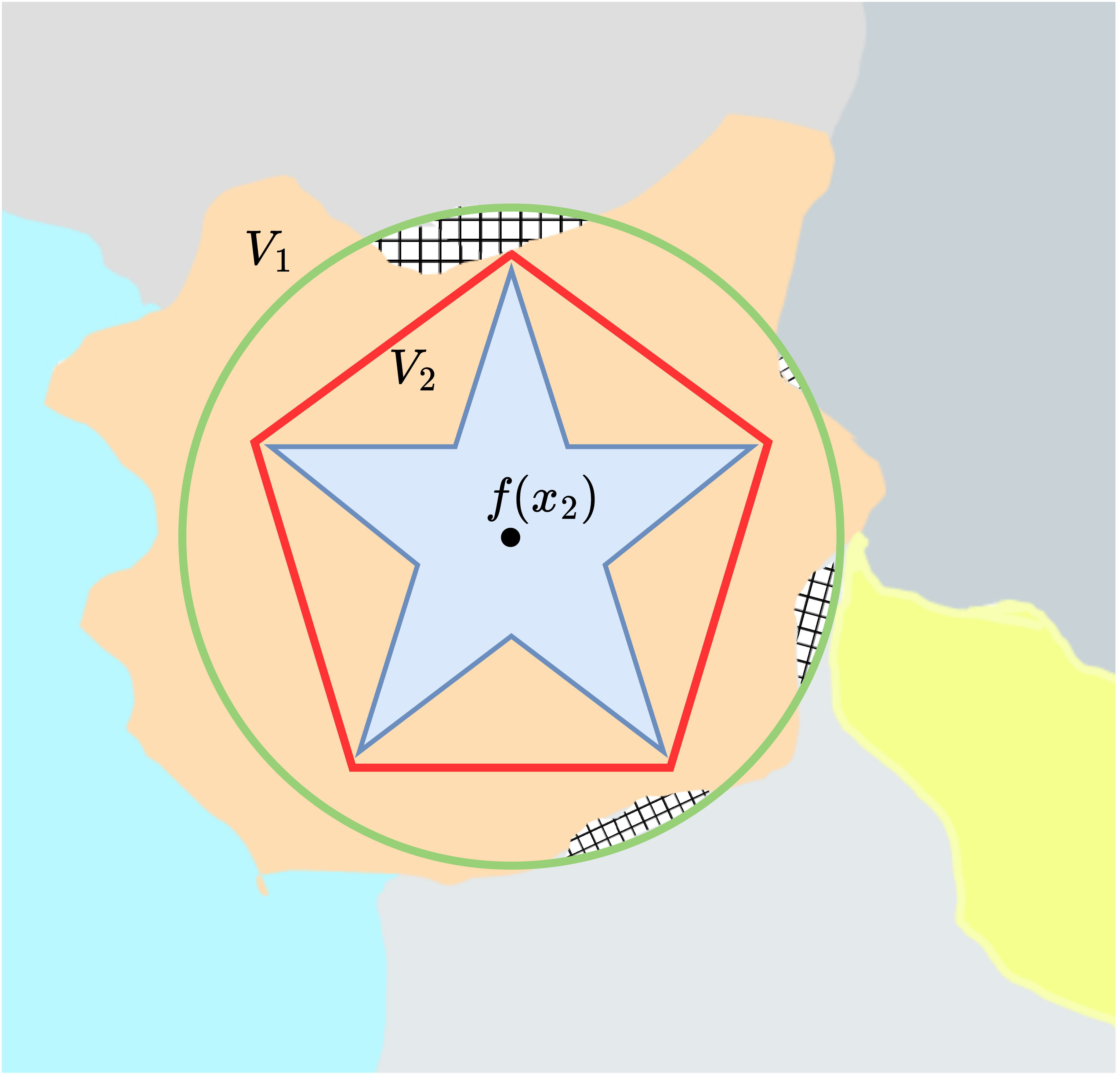} 
        \caption{\(V_1\) \& \(V_2\) disagreement} 
        \label{fig:fn}
    \end{subfigure}
    \caption{Visualization of two individual verifier performance: \( V_1 \) (looser verifier) and \( V_2 \) (tighter verifier). Shaded areas in (a) and (b) shows class boundaries and the star shows the non-convex feasible set. (a) A perturbed input, $x_1$ is correctly certified as robust by both verifiers. (b) A robust input, $x_2$ is verified as non-robust by \( V_1 \) as hatched areas are outside of class boundaries (false negative), but correctly certified by \( V_2 \).}
    \label{fig:def1}
\end{figure}


Incomplete methods, on the other hand, use convex relaxations to enable faster verification, typically at the cost of looser bounds, which may compromise the reliability of robustness certification~\cite{fazlyab2020safety, brix2024fifth}. 
For example, Linear Programming (LP) relaxation methods solve the problem efficiently by replacing nonlinear components with linear constraints at the expense of providing a looser robustness bound~\cite{wong2018provable,wong2018scaling}.  
Semidefinite Programming (SDP) methods reformulate verification as a convex problem over symmetric positive semidefinite matrices with linear constraints, yielding tighter robustness bounds with high computational cost which limits its scalability for deeper architectures~\cite{raghunathan2018semidefinite}. There are also variations of LP and SDP relaxation methods with different levels of complexity and tightness of the bounds (e.g., \cite{dathathri2020enabling,li2023sok,lan2022tight}). 
Existing incomplete verification methods exhibit two key limitations  due to their focus on improving computational efficiency or robustness bound using a \textit{single} verification method. 
First, when a single verifier is used, we may \textit{underestimate robustness} due to false negatives (the inputs that are robust but the verifier identifies them as non-robust) caused by loose approximations. The second limitation is the misalignment between the robust training and verification procedures. This issue, known as \textit{verification-specific behavior}, arises when models trained with one robustness method perform poorly under a misaligned robustness verifier~\cite{li2023sok}.


This paper is motivated by the goal of addressing these two gaps, guided by the intuition that systematically combining multiple verification methods can improve the effectiveness of incomplete robustness verification while incurring substantially lower computational cost and simultaneously mitigating the verification misalignment problem. 
The example in Fig.~\ref{fig:def1} illustrates our motivation and key insights. Consider two robustness verifiers: $V_1$ (e.g., an efficient LP-based method with a looser bound, shown in green) and $V_2$ (e.g., a more expensive SDP-based method with a tighter bound, shown in red). For the inputs in this example, the achievable upper bound on robust accuracy is $48\%$ (see Section~\ref{sec:experiment}), as shown in Fig.~\ref{fig:toy-example}. A subset of inputs ($14\%$), such as $x_1$ in Fig.~\ref{fig:def1}a, can be certified as robust by $V_1$ at low computational cost, whereas the remaining inputs, such as $x_2$ in Fig.~\ref{fig:def1}b, require the tighter but more expensive verifier $V_2$. Intuitively, applying $V_1$ first and invoking $V_2$ only when necessary should attain the same robust accuracy as $V_2$ alone, while incurring high computational cost for only a fraction of inputs. This intuition is empirically confirmed in Fig.~\ref{fig:toy-example}, where the verification time is reduced from $156$ minutes (using $V_2$ alone) to $72$ minutes when $V_1$ and $V_2$ are combined. Notably, this cascading strategy also increases certified robust accuracy from $22\%$ to $36\%$.
Although, in principle, combining $V_1$ and $V_2$ should match the robust accuracy of the tighter verifier $V_2$, in practice the latter often fails to converge for a subset of inputs due to its high computational demands, resulting in unverifiable cases. Moreover, this example illustrates the verifier misalignment issue: the neural network is trained using an SDP-based approach~\cite{raghunathan2018semidefinite}, yet verification is performed using LP-cert~\cite{wong2018provable} and an SDP formulation that differs from the one used during training (SDP-cert). This misalignment between training and verification procedures results in LP-cert and SDP-cert achieving different certified robust accuracies. By integrating both verifiers within our cascading framework, we not only make the certification computationally significantly more efficient, but also exploit their complementary strengths, thereby certifying a strictly larger set of inputs than would be attainable using either verifier in isolation. 
\begin{figure}[!t]
        \centering \includegraphics[width=0.445\textwidth]{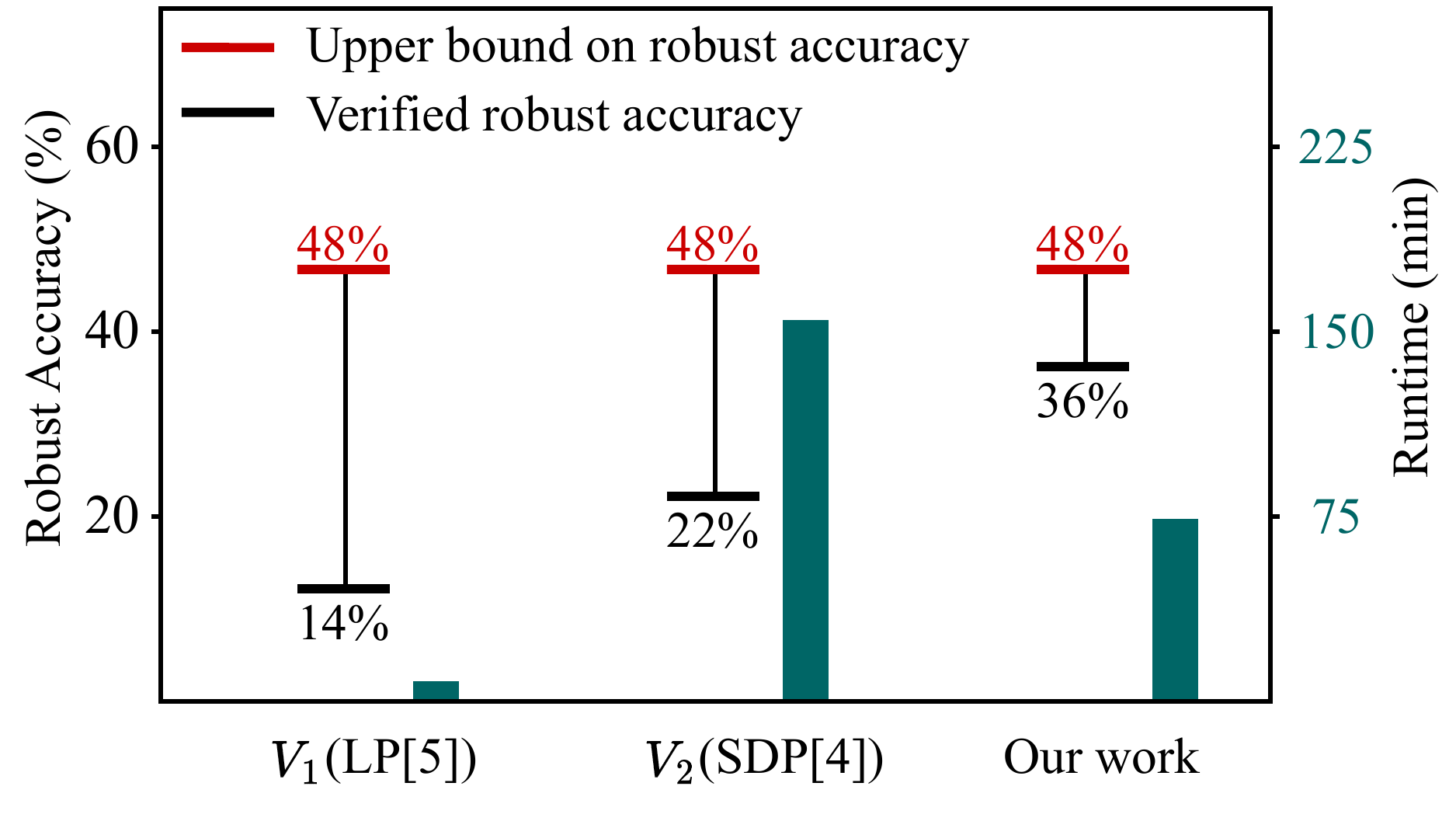} 
    \caption{Robust accuracy and running time of \( V_1 \), \( V_2 \), and our proposed method of combined  \(V_1 \)\&\( V_2 \) to verify MNIST under $\ell_\infty$ adversarial perturbations.}
    \label{fig:toy-example}
\end{figure}

\noindent\textbf{Contributions.} We are making the following contributions: (1) We identify and formalize the limitations of relying on a single incomplete verifier, addressing two problems of {underestimated robustness} and misalignment between the training and verification procedures.  
Our proposed framework is \textit{model-agnostic}, meaning its certification performance is independent of training procedure.   
(2) We introduce \textit{Cascading Robustness Verification (CRV)}, a multi-stage, model-agnostic verifier for certifying the robustness of NNs under norm-bounded adversarial perturbations. CRV strategically combines multiple verification methods in a hybrid pipeline that achieves both \textit{robustness precision} and \textit{computational efficiency}.
(3) We propose \textit{Stepwise Relaxation Algorithm (SR)} to further improve the efficiency of individual verifiers by applying successively tighter relaxations. The procedure begins with a coarse approximation; if robustness is certified, verification halts early. Otherwise, stricter constraints are applied to tighten the bound. We show that by integrating SR into the CRV framework (\textit{CRV-SR}), we can enhance robustness precision while reducing computational cost.
(4) We develop \textit{Fast Stepwise Relaxation Algorithm (FSR)} to further accelerate verification by selectively retaining only the most impactful constraint sets, discarding those with negligible effect on bound quality. Combining FSR with CRV yields \textit{CRV-FSR}, which significantly reduces runtime with only minimal loss in certified robustness.

\section{Related Work}\label{sec:appendix-relatedwork} 
Certified robustness of NNs has attracted extensive research. Approaches can be broadly classified into \emph{complete} and \emph{incomplete} verification methods. Complete methods (e.g., mixed-integer linear programming~\cite{bastani2016measuring} or SMT~\cite{pulina2012challenging} solvers) rely on discrete optimization to guarantee robustness for any given network while producing tight certificates. Despite their tight guarantees, these methods are often computationally expensive, with worst-case complexity growing exponentially with network size~\cite{raghunathan2018semidefinite}. 

In contrast, incomplete methods use convex relaxations and overapproximations to enable faster verification. These techniques transform the non-convex certification problem into a convex one by replacing sources of non-convexity, such as nonlinear activation functions, with convex approximations. Although such relaxations significantly improve computational efficiency, they come at the cost of loose bounds~\cite{fazlyab2020safety,raghunathan2018semidefinite}, which can affect the quality of robustness certification. Recent work focuses on improving these convex relaxations to tighten bounds while preserving scalability~\cite{fazlyab2020safety}. We now review key lines of work in this space, including LP- and SDP-based relaxations, as well as metrics for robustness evaluation.

\subsection{LP–Based Methods}

Early research employed \emph{linear relaxations} and \emph{bound propagation} to estimate the adversarial loss.
Reference~\cite{wong2018provable} introduced a convex outer approximation for training ReLU networks with provable robustness under $\ell_\infty$ perturbations. They define an adversarial polytope $\tilde Z_\varepsilon(x)$ over all possible perturbed activations and then derive a dual LP formulation to compute a tractable upper bound on worst-case loss. In this dual, auxiliary variables $v_i$ propagate bounds backward through each layer, handling ReLU nonlinearity. However, the dual LP still scales quadratically with the number of hidden units. Reference~\cite{wong2018scaling} later proposed using random Cauchy projections to approximate the LP dual, yielding \emph{linear} scaling with network size and input dimension.   

Building on these ideas, bound-propagation frameworks like CROWN~\cite{zhang2022general} further automate the dual relaxation to any feed-forward network, computing layerwise pre-activation bounds and propagating them through linear inequalities. Recent work has refined CROWN’s relaxations: for example, \(\beta\)-CROWN~\cite{wang2021beta} introduces per-neuron split constraints to tighten the linear relaxation. By adaptively splitting the feasible region of selected neurons, \(\beta\)-CROWN improves the tightness of bounds with only a modest increase in computational cost. These bound-propagation methods maintain the speed of linear relaxations while achieving higher verified robust accuracy on benchmarks.  

Other LP-based techniques include interval bound propagation (IBP) and zonotope methods, which use simpler but cruder relaxations for very fast verification (often used in training)~\cite{fazlyab2020safety}. In summary, LP-based and bound-propagation methods form the backbone of many modern incomplete verifiers, as they are fast and broadly applicable. This makes the method practical for larger models, though balancing bound tightness and scalability remains a challenge for reliable robustness certification.

\subsection{SDP–Based Methods}

Another approach to improve upper bounds on adversarial loss is to use \emph{SDP} relaxations. Reference~\cite{raghunathan2018semidefinite} proposed one of the first SDP-based bounds for NNs. Their method formulates the worst-case adversarial loss as a quadratic program over the ReLU constraints, which they relax to a convex SDP. This yields significantly tighter bounds than earlier norm-based methods, especially for shallow networks, at the expense of much higher computational cost. They also extended their SDP relaxation to multiple layers by introducing quadratic constraints (QCs) that capture ReLU relationships, effectively relaxing a quadratically constrained quadratic program (QCQP) to an SDP.  

Reference~\cite{fazlyab2020safety} developed a related robust-control approach: they use SDP with various QCs to bound perturbations in ReLU networks, allowing a trade-off between tightness and cost by adding more constraints. To scale SDP methods further,~\cite{dathathri2020enabling} proposed a first-order dual solver for~\cite{raghunathan2018semidefinite}, reducing memory and computational cost. Reference~\cite{lan2022tight} added another layer of efficiency by combining SDPs with reformulation-linearization technique (RLT) cuts: they generate linear inequalities from SDP relaxation constraints on the fly, focusing on the most violated neuron bounds. This layer-wise process can certify much larger networks than naive SDPs.  

Recently, hybrid methods have merged SDP ideas with bound propagation. For example, SDP-CROWN~\cite{chiu2025sdp} integrates semidefinite constraints into the CROWN framework, inheriting both the efficiency of bound propagation and the tightness of SDP-based relaxations. SDP-CROWN adds second-order constraints (from SDP relaxations) into the linear propagation steps, yielding much tighter bounds than standard CROWN while still scaling to deep networks. In general, SDP-based verifiers produce the strongest certificates among convex relaxations, but they require sophisticated approximations (or selective constraint inclusion) to run on large models.

\subsection{Robustness Verification Metrics}

Researchers commonly assess verification methods using multiple metrics, as outlined below, with particular emphasis on bound quality and robust accuracy.


\noindent\textbf{Bound Tightness.} One approach (exemplified by~\cite{fazlyab2020safety}) is to compare the worst-case loss bounds from different relaxations on a fixed input, NN model, and perturbation level \(\epsilon\). However, it only indicates the relative tightness of bounds and lacks robustness verification for well-known datasets such as MNIST. Consequently, this method is seldom used in literature to assess verification quality or to compare it with alternative approaches~\cite{fazlyab2020safety}.  

\noindent\textbf{Robust Accuracy.} More commonly, papers report the fraction of randomly selected inputs that the method certifies as robust for a particular model under a specified perturbation budget, also known as \emph{verified robust accuracy}~\cite{raghunathan2018semidefinite}.
That is, we count how many examples a given verifier can certify as having no adversarial counterexample in the $\varepsilon$-ball. This metric is intuitive and directly comparable across methods on datasets like MNIST or CIFAR. However, it has two major caveats. First, a verifier with a loose relaxation can produce \emph{false negatives}, undercounting robust inputs (a robust input might be certified as non-robust by a verifier). Thus, robust accuracy depends both on the model’s true robustness and the verifier’s tightness. Second, and more subtly, there is often a misalignment between the robustness training procedure and the verifier used for evaluation~\cite{li2023sok}. If a model is trained against one type of bound computation (e.g., IBP) but evaluated with a different verifier (e.g., CROWN), the robust accuracy can drop simply due to this misalignment. Our proposed framework, CRV, mitigates false negatives caused by loose approximations or misalignment between training and verification methods.

\section{Robustness analysis of neural networks}\label{sec:robustness-analysis-of-NN}
\noindent
Consider a neural network \( f: \mathcal{X} \rightarrow \mathcal{Y} \) with one hidden layer consisting of \( m \) neurons. The input space is \( \mathcal{X} \subseteq \mathbb{R}^d \), where \( d \) is the input dimension, and the output space is \( \mathcal{Y} \subseteq \mathbb{R}^L \), representing scores (or logits) across \( L \) classes. The network computes the output scores as \( f(x) \triangleq W_2\,\mathrm{ReLU}(W_1 x) \), where \( W_1 \in \mathbb{R}^{m \times d} \) and \( W_2 \in \mathbb{R}^{L \times m} \) are the weight matrices connecting the input to the hidden layer and the hidden layer to the output layer, respectively.
 For simplicity, we omit bias terms, though they are included in our experiments.

\subsection{Robustness Verification} 
Given an input \( x \in \mathcal{X} \), the network assigns a label \( y \) by selecting the class with the highest score:
\begin{equation}
y \triangleq \underset{1 \leq i \leq L}{\arg\max} f(x)_i,
\end{equation}
where \( f(x)_i\) is the \(i^{th}\) element of \( f(x)\).
To evaluate robustness, we consider the maximum allowable perturbations around \( x \) within an \( \ell_\infty \)-ball of perturbation level \( \varepsilon \) as: 
\begin{equation}\label{eq:perturbation-budget}
B_{\varepsilon}(x) \triangleq \{x' \mid \|x' - x\|_{\infty} \leq \varepsilon\}.
\end{equation}
The worst-case margin \( l^\star(y, y') \) quantifies the robustness of the model at input \(x\) by maximizing the difference between the logit score of any incorrect class \( y' \neq y \), \(f(x')_{y'}\), and the true class \( y \), \(f(x')_y \), over all valid perturbations:
\begin{align}\label{eq:verification-method}
    \begin{split}
l^\star(y, y') \triangleq &\max_{x'} \big(f(x')_{y'} - f(x')_{y}\big)\\
&\text{s.t.} \quad  f(x') = W_2 \operatorname{ReLU}(W_1 x'),\\
&\quad \quad \ x' \in B_{\varepsilon}.
    \end{split}
\end{align}

An input \(x\) is certifiably robust if, for all \( y' \neq y \), the worst-case margin satisfies \( l^\star(y, y') < 0\), ensuring no perturbation within \( B_{\varepsilon}(x) \) leads to misclassification. Conversely, \(x\) is considered non-robust if there exists a \( y' \neq y \) such that \( l^\star(y, y') \geq 0 \), indicating a potential adversarial example within the allowed perturbation level \(\varepsilon\). We need to solve this optimization problem for each input in the dataset to determine the neural network's robustness.
Solving~\eqref{eq:verification-method} is generally intractable due to the non-convexity introduced by ReLU activations~\cite{raghunathan2018semidefinite}. Although there are several complete verification methods, e.g.,~\cite{wang2021beta},
that solve~\eqref{eq:verification-method},  
they are often computationally expensive, with worst-case complexity growing exponentially with the network size~\cite{raghunathan2018semidefinite}. When complete methods are used for robustness verification, the robustness of a neural network over the dataset \(\mathcal{X}\) is defined as \(|S| / |\mathcal{X}|\), where \(S \subseteq \mathcal{X}\) denotes the set of certifiably robust inputs. 

Incomplete verification methods approximate the original non-convex constraints using convex relaxations, e.g., LP or SDP, to compute an upper bound \(L(y, y')\) on the worst-case margin \( l^\star(y, y') \). A certification is obtained when \( L(y, y') < 0 \).
We define the relaxed verification problem in~\eqref{eq:verification-method} as:
\begin{align}\label{eq:verification-method-incomplete}
    \begin{split}
L(y, y') \triangleq &\max_{x'}  \big(f(x')_{y'} - f(x')_{y}\big) \\
&\text{s.t.} \quad  x' \in \mathcal{C}_r \quad \text{for } r = 1, \dots, k,
    \end{split}
\end{align}
where \( \{\mathcal{C}_1, \dots, \mathcal{C}_k\} \) is a convex relaxation of the constraints in~\eqref{eq:verification-method}.

The set of certified robust inputs depends on the tightness of \( L(y, y') \). Loose bounds can lead to false negatives, where robust inputs are verified incorrectly as non-robust. Fig.~\ref{fig:def1} illustrates the class boundaries (in different shades), two verifiers $V_1$ and $V_2$ such that $V_2$ computes tighter bounds than $V_1$, and the non-convex feasible set (denoted by a star). In particular, in Fig.~\ref{fig:def1}(b), the over-approximation produced by $V_1$ (green) crosses the true decision boundary, causing the point $x$ to be declared non-robust (hashed area). In contrast, the tighter over-approximation of $V_2$ (red) correctly certifies $x$ as robust.
\begin{figure*}[t]
    \centering 
        \includegraphics[width=\textwidth]{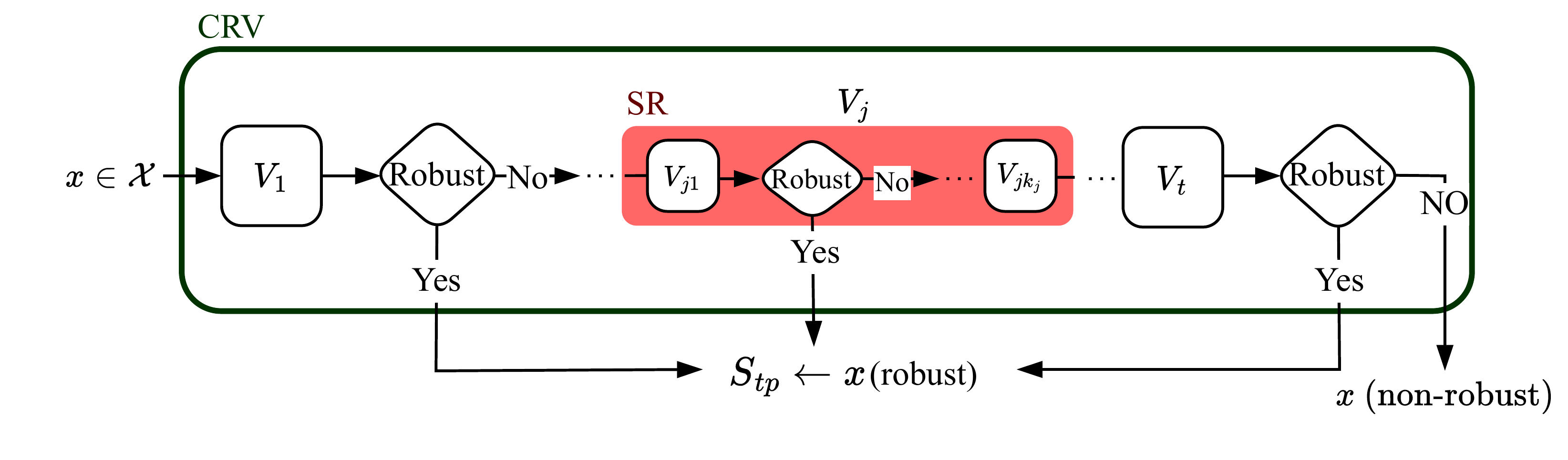}  
    \caption{Diagram of the combined CRV and SR to improve both robustness certification and computational efficiency.}
    \label{fig:diagram-crv}
\end{figure*}
\begin{definition}\label{def:tfn}
We categorize inputs by comparing the worst-case margin \(l^\star(y, y')\) with its estimated upper bound \(L(y, y')\) and define the following sets:
\begin{itemize}
    \item True positives (\(S_{tp}\)) are robust inputs correctly verified as robust:
    \begin{equation*} S_{tp} \triangleq \{x \in \mathcal{X} \mid l^\star(y, y') < 0 \land L(y, y') < 0\}
    \end{equation*}
    \item True negatives (\(S_{tn}\)) are non-robust inputs correctly identified as non-robust:
    \begin{equation*} S_{tn} \triangleq \{x \in \mathcal{X} \mid l^\star(y, y') \geq 0 \land L(y, y') \geq 0\}
    \end{equation*}
    \item False negatives (\(S_{fn}\)) are robust inputs incorrectly classified as non-robust:
    \begin{equation*} S_{fn} \triangleq \{x \in \mathcal{X} \mid l^\star(y, y') < 0 \land L(y, y') \geq 0\}
    \end{equation*}
\end{itemize}
\end{definition}

\begin{remark}
The set of false positives (\(S_{fp}\)),
\begin{equation*}
    \quad \quad S_{fp} \triangleq \{x \in \mathcal{X} \mid l^\star(y, y') \geq 0 \land L(y, y') < 0\}
\end{equation*}
\noindent is empty, as \(L(y, y')\) is an upper bound on \(l^\star(y, y')\); thus, \(L(y, y') < 0\) guarantees \(l^\star(y, y') < 0\).  
\end{remark}

We now define the notions of true robust accuracy and empirical robust accuracy in the context of verification methods.

\begin{definition}\label{def:ra}
\textit{True robust accuracy (TRA)} is the fraction of inputs that are robust to norm-bounded adversarial perturbations:
\begin{equation}\label{eq:RA}  
TRA \triangleq \frac{\lvert S_{tp} \rvert + \lvert S_{fn} \rvert}{\lvert \mathcal{X} \rvert}.
\end{equation}
\end{definition}
In practice, the set \(S_{fn}\) is unobservable, as incomplete verifiers cannot detect all robust inputs. As a result, the empirical robust accuracy, \(\lvert S_{tp} \rvert / \lvert \mathcal{X} \rvert\), serves as a lower bound on TRA. On the other hand, strong empirical attacks such as PGD provide an upper bound on TRA: if the attack succeeds on a fraction \(E_1\) of the inputs \(\mathcal{X}\), then at most \(1 - E_1\) fraction of the inputs \(\mathcal{X}\) can be robust. Together, these bounds form the interval:
\[
    \text{RA} \leq \text{TRA} = 1 - (\lvert S_{tn} \rvert / \lvert \mathcal{X} \rvert) \leq 1 - E_1.
\]
The objective is to tighten the interval for TRA by improving the quality of robustness verification for each input, thereby obtaining the most accurate possible estimate of TRA. From this point on, we refer to empirical robust accuracy as robust accuracy for convenience.


\subsection{Incomplete Verification: Limitations of Robust Accuracy}

\noindent\textbf{False Negatives.} Incomplete verifiers are efficient but often yield loose bounds, which can lead to false negatives~\cite{fazlyab2020safety,li2023sok}, as shown in Fig.~\ref{fig:def1}b. A bound \(L \geq 0\) does not necessarily imply that the worst-case margin \(l^\star(x, y) \geq 0\); the failure may stem from either non-robustness or a loose relaxation. Consequently, the verifier may ``fail to certify'' inputs that are truly robust, causing robust accuracy to underestimate the robustness of the model under the given perturbation~\cite{raghunathan2018semidefinite,dathathri2020enabling}. Since empirical robust accuracy is only a lower bound on TRA (Definition~\ref{def:ra}), tightening relaxations, at additional computational cost, reduces false negatives by certifying such inputs, yielding more accurate and reliable estimates of the model’s TRA.


\noindent\textbf{Training and Verification Misalignment.}  
A second limitation is that robust accuracy is highly dependent on the choice of training and verification methods, which introduces bias in comparisons across approaches. Robust training algorithms typically optimize a specific surrogate objective (e.g., adversarial loss). When such a model is evaluated with a different incomplete verifier, the reported robust accuracy can vary significantly (due to verification-training misalignment). For example, convex relaxations are loose on networks that are not trained against the corresponding relaxation, so a model trained with one method may not verify well under a different relaxation~\cite{raghunathan2018semidefinite}. Empirical studies confirm this effect: models adversarially trained with PGD~\cite{madry2017towards} often achieve strong robustness when evaluated with complete verifiers, but common incomplete verifiers fail to certify a portion of the same inputs. Our experimental results further confirm that robust models trained with a specific relaxation can exhibit lower verified robustness when evaluated using a different relaxation, illustrating the effect of training and verification misalignment.

Taken together, these limitations indicate that relying on a single incomplete verifier provides an unreliable estimate of TRA for robustly trained models.
The next section presents a theoretical investigation of these limitations and how our proposed cascading verification method addresses them.  

\begin{figure*}[t]  
    \centering
    \begin{minipage}{\textwidth}  
        \begin{algorithm}[H]
            \caption{CRV: Cascading Robustness Verification}
            \label{alg:CRV}
            \begin{algorithmic}[1]
            \Statex \textbf{Inputs:} input set \( \mathcal{X} \); model parameters \(W\); verifiers \( V_1, \dots, V_t \) (each with submethods)
            \Statex \textbf{Output:} robust accuracy (\texttt{RA}) over \( \mathcal{X} \)
                \State Initialize \( S_{tp} \gets \emptyset \)
                \For{each \( x \in \mathcal{X} \)} \Comment{Inputs to verify}
                    \For{each \( V_j \in \{V_1, \dots, V_t\} \)} 
                        \State \( S_{tp} \gets \texttt{RobustnessCheck}(x, W, V_j, S_{tp}) \) \Comment{Call Alg.~\ref{alg:SR} to check the robustness of \(x\)}
                        \If{ \( x \in S_{tp} \)} \Comment{Stop if \(x\) verified as robust}
                            \State \textbf{break}
                        \EndIf
                    \EndFor \Comment{Proceed to next input}
                \EndFor
                \State \( \texttt{RA} \gets \frac{\lvert S_{tp} \rvert}{\lvert \mathcal{X} \rvert} \)
                \State \Return \texttt{RA}
            \end{algorithmic}
        \end{algorithm}
    \end{minipage}
\end{figure*}
\section{Cascading robustness verification}\label{sec:crv}
CRV framework follows a \textit{pay as you go} strategy to progressively apply more expensive verification methods with tighter bounds and higher computational complexity as needed. Fig.~\ref{fig:diagram-crv} depicts the operational flow of the framework and its individual components. For every input \( x \in \mathcal{X} \), CRV begins with the least expensive verifier ($V_1$) and only proceeds to more computationally demanding methods if the preceding methods were unable to verify the robustness of the input.

Algorithm~\ref{alg:CRV} formally describes the procedure using a sequence of verification methods \( V_1, \dots, V_t \), ordered by increasing computational complexity.
In Lines 3-8, the robustness of each input \( x \) is determined based on the verifier selected from the set of $V$. For each selected verifier, this algorithm calls a subroutine described in Algorithm~\ref{alg:SR} and applies the verifiers sequentially until one of them, say \( V_j \), certifies \( x \) as robust. In this case, \( x \) is added to the set of true positives \( S_{tp} \) (Line 4), and further verification is skipped. If all verifiers certify $x$ as non-robust, then $x$ is classified as non-robust. Once all inputs in the dataset \( \mathcal{X} \) have been processed, the overall robust accuracy is computed as \( RA = |S_{tp}| / |\mathcal{X}| \). 

\begin{figure}[t]
    \centering 
    \begin{subfigure}{0.24\textwidth}
        \centering 
        \includegraphics[scale=0.065]{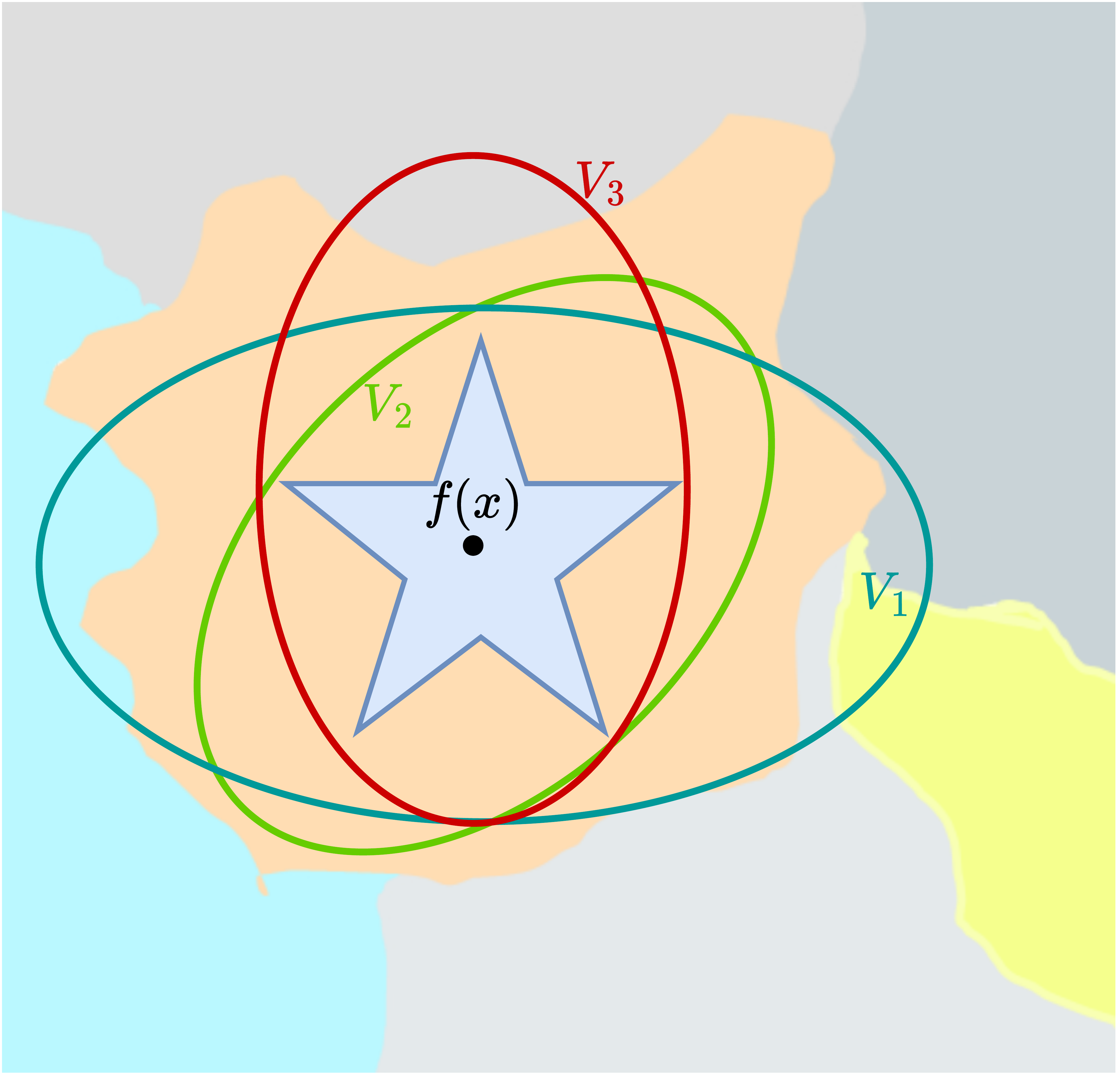} 
        \caption{CRV} 
        \label{fig:crv}
    \end{subfigure} 
    \begin{subfigure}{0.24\textwidth}
        \centering 
        \includegraphics[scale=0.065]{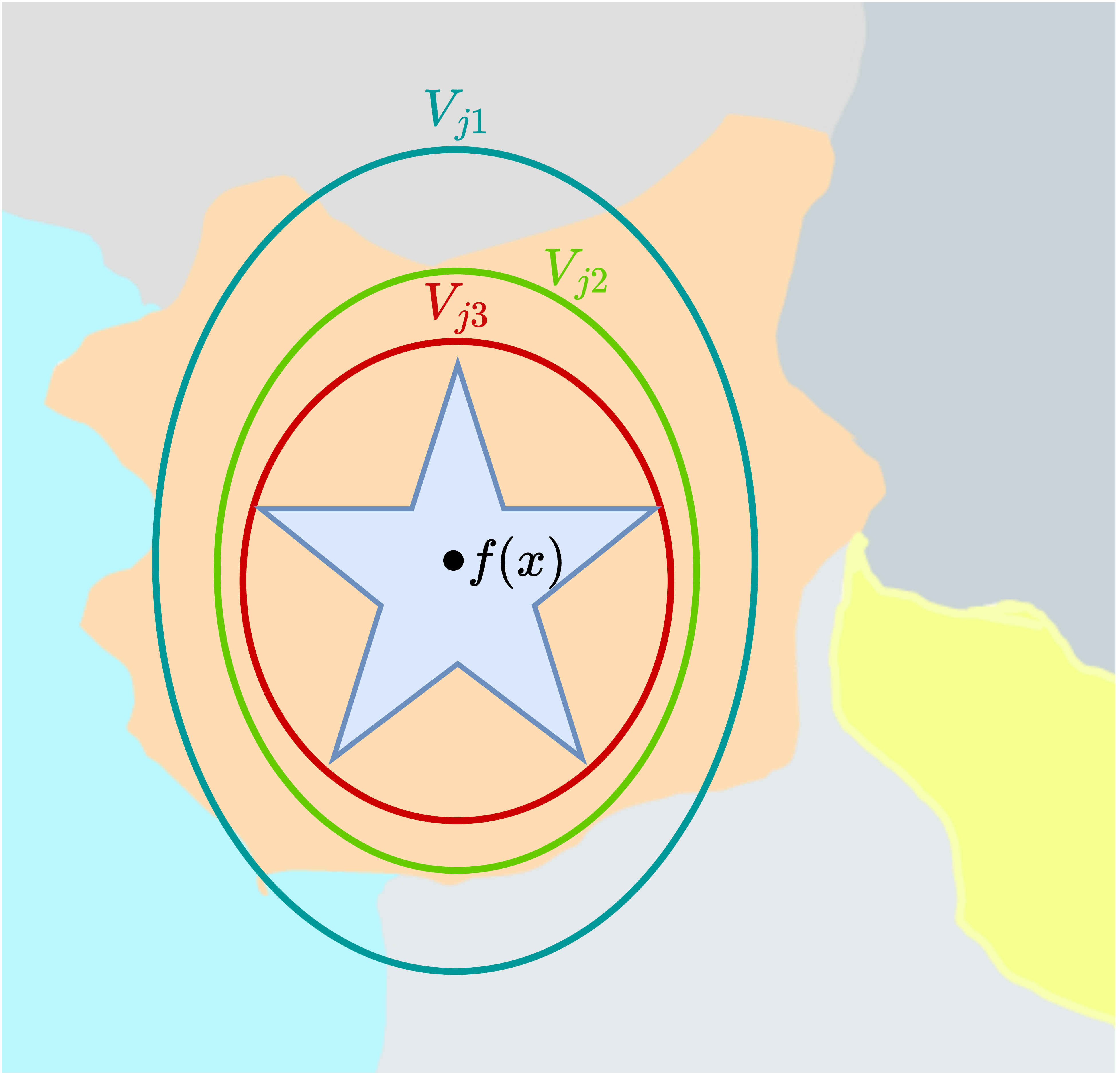} 
        \caption{SR} 
        \label{fig:SR}
    \end{subfigure}  
    \caption{Visualization of classification outcomes under incomplete verifiers. The figures show class boundaries (shaded) and a non-convex feasible set (star). In (a), CRV certifies input when \emph{any} verifier succeeds, which is \(V_2\). 
    In (b), SR iteratively adds constraints to tighten the approximation; here, \(V_{j2}\) certifies the input as robust, so the tighter and more expensive verifier \(V_{j3}\) does not need to be applied.}
    \label{fig:def}
\end{figure}

The intuition behind CRV's computational efficiency and improved robustness measurement is illustrated in Fig.~\ref{fig:crv}. Consider three verification methods \(V_1\), \(V_2\), and \(V_3\), ordered by increasing computational complexity, each providing an upper bound on the non-convex objective in~\eqref{eq:verification-method}. If \(V_2\), which is a less expensive verifier than \(V_3\), certifies robustness without intersecting decision boundaries, the input can be verified without invoking \(V_3\). Moreover, \(V_2\) may correct false negatives produced by both \(V_1\) and \(V_3\). CRV exploits such cases to reduce verification cost while improving the robust accuracy. We provide a detailed explanation of method ordering in Remark~\ref{remark:limitation}.

\begin{remark}[\textbf{On the Ordering of Verification Methods}]\label{remark:limitation}
All verifiers in CRV target the same non-convex robustness objective in~\eqref{eq:verification-method} but differ in how they relax sources of non-convexity, particularly from ReLU activations. The ordering is based on the verifier’s relaxation strength and computational cost: linear relaxations (e.g., LP-based) are applied early, while more complex methods involving quadratic or semidefinite constraints follow. As a result, the ordering becomes straightforward when considering the types of relaxations used. Although runtime may vary with the input, we adopt a fixed ordering based on theoretical and empirical complexity rather than input-specific heuristics. The computational complexity of each method is detailed in~\cite{li2023sok}. When the relative computational complexity of two verifiers are not accessible, a brief empirical evaluation on a small representative subset can be used to fix a consistent ordering. If computational complexities are comparable, prioritizing the tighter verifier is preferable, as certifying robust inputs early avoids redundant execution of later verification stages. CRV balances certification strength and efficiency while remaining modular and easily extensible to future incomplete verifiers or architectural variations.
\end{remark}

The CRV framework \textit{avoids unnecessary computation} by skipping expensive verifiers for inputs already certified by earlier, less expensive methods. Since the more expensive method ($V_t$) will eventually be used when faced with a challenging input (an input that none of the preceding methods identified as robust), CRV \textit{achieves robust accuracy at least as high as the best individual method}. Importantly, using tighter approximations later in the cascade also helps correct false negatives produced by earlier, looser verifiers. This behavior improves robust accuracy, thereby yielding more reliable TRA estimates. In addition, by aggregating multiple verifiers, CRV also reduces overreliance on any single verification method and \textit{addresses potential misalignment} between training-time robustness objectives and verifiers. For example, a model trained using an SDP-based verifier may appear less robust when evaluated with an LP-based method. CRV mitigates such underestimation by allowing all available verification methods to contribute, yielding a \textit{more accurate} and \textit{fair estimation of TRA}. 

The robust accuracy achieved by CRV is represented in the following theorem.
\begin{theorem}[Robust Accuracy under CRV]\label{proof-theorem-crv}
Let \( V_1, \dots, V_t \) be verification methods with increasing cost, and let \( RA_i \) and \( S_{tp_i} \) denote the robust accuracy and true positive set of method \( V_i \), respectively, then the overall robust accuracy achieved by CRV satisfies:
\begin{equation}
    RA = \frac{\left|\bigcup\limits_{i=1}^{t} S_{tp_i}\right|}{\left|\mathcal{X}\right|},
\end{equation}
with the following lower bound: 
\begin{equation}  
    RA \geq \max_{1 \leq i \leq t} RA_i.  
\end{equation} 
The total verification cost (TVC) for CRV is given by:
\begin{equation}
    TVC = \left|\mathcal{X}\right| T_1 + \cdots + \left|\mathcal{X} \setminus \bigcup\limits_{i=1}^{t-1} S_{tp_i}\right| T_t,
\end{equation}
where \( T_j \) denotes the average runtime per input for method \( V_j \).
\end{theorem}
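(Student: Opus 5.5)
The plan is to read off all three claims from the control flow of Algorithm~\ref{alg:CRV}. The key observation is that, for a fixed input \(x\), Algorithm~\ref{alg:CRV} places \(x\) into the final set \(S_{tp}\) if and only if some verifier \(V_j\) certifies it, i.e.\ \(x \in S_{tp_j}\) for some \(j\). I will take as given that each verifier's decision on \(x\) is deterministic and does not depend on which verifiers were run before it: \texttt{RobustnessCheck} either adds \(x\) to \(S_{tp}\) exactly when \(L(y,y')<0\) under \(V_j\), or it leaves \(S_{tp}\) unchanged.

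\textbf{Characterizing the certified set.} I will prove that the final set equals \(\bigcup_{i=1}^t S_{tp_i}\) by two inclusions.
\begin{itemize}
\item \emph{Forward inclusion.} If \(x\) ends up in \(S_{tp}\), it was added at some call to \(V_j\). That call means \(V_j\) certified \(x\), so \(x \in S_{tp_j}\).
\item \emph{Reverse inclusion.} Suppose \(x \in S_{tp_j}\), and let \(j\) be the smallest such index. Then \(V_1,\dots,V_{j-1}\) all fail on \(x\), so the \textbf{break} in Line~5 is never triggered before reaching \(V_j\). Hence \(V_j\) is actually invoked on \(x\), and it adds \(x\) to \(S_{tp}\).
\end{itemize}
Dividing the size of this set by \(|\mathcal{X}|\) gives the formula for \(RA\). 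The lower bound then follows immediately: \(S_{tp_i}\subseteq \bigcup_k S_{tp_k}\) for every \(i\), so \(RA \ge RA_i\) for each \(i\), and therefore \(RA\) is at least the maximum.

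\textbf{Cost.} I will count, for each stage \(j\), how many inputs reach \(V_j\). By the same minimal-index argument, \(V_j\) runs on \(x\) exactly when \(x\notin \bigcup_{i=1}^{j-1} S_{tp_i}\). For \(j=1\) this set is empty, so all of \(\mathcal{X}\) reaches \(V_1\). Summing the per-input runtimes, and taking \(T_j\) as the average runtime of \(V_j\) over the inputs it processes, gives
\[
TVC=\sum_{j=1}^{t}\Bigl|\mathcal{X}\setminus \textstyle\bigcup_{i=1}^{j-1} S_{tp_i}\Bigr|\,T_j .
\]
Its first and last terms are exactly the ones displayed in the theorem.

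\textbf{Main obstacle.} The only delicate point is interpretive rather than technical: \(T_j\) must be read as the average runtime over the inputs actually sent to \(V_j\). If it is instead the average over all of \(\mathcal{X}\), the identity holds only approximately. I would state this convention explicitly before the cost step. I would also note that if a verifier fails to converge on some input, that run is counted as a non-certification, so the set identity is unaffected.
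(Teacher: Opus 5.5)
Your proposal is correct and takes essentially the same route as the paper. The paper writes the CRV-certified set as $S_{tp_1} \cup (S_{tp_2}\setminus S_{tp_1}) \cup \cdots$, collapses it to $\bigcup_i S_{tp_i}$, and uses $\max_i |S_{tp_i}| \le |\bigcup_i S_{tp_i}|$ for the lower bound; for the cost it counts the inputs that reach each stage, which is your minimal-index, two-inclusion argument in less explicit form. Your two added conventions, that each verifier's decision is deterministic and independent of the others and that $T_j$ is averaged over the inputs actually sent to $V_j$, are assumptions the paper uses only implicitly, and stating them makes the TVC formula an exact identity.
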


\begin{proof}
Let \( V_1, V_2, \dots, V_t \) denote verification methods ordered by increasing computational complexity, as described in~\cite{li2023sok}. Let \( S_{tp_1}, S_{tp_2}, \dots, S_{tp_t} \) denote the sets of true positives (i.e., correctly certified robust inputs) identified by each method, and let \( \mathcal{X} \) denote the full dataset. The corresponding robust accuracies are \( RA_i = \frac{|S_{tp_i}|}{|\mathcal{X}|} \).

Under CRV, verification proceeds sequentially. \(V_1\) is applied first to certify \(S_{tp_1}\). The remaining inputs \( \mathcal{X} \setminus S_{tp_1} \) are passed to \(V_2\), which certifies \(S_{tp_2} \setminus S_{tp_1}\), and so on. The total set of robust inputs certified by CRV is:
\[
S_{\text{CRV}} = S_{tp_1} \cup (S_{tp_2} \setminus S_{tp_1}) \cup (S_{tp_3} \setminus \bigcup_{i=1}^{2} S_{tp_i}) \cup \cdots \cup (S_{tp_t} \setminus \bigcup_{i=1}^{t-1} S_{tp_i}).
\]
Hence, the robust accuracy under CRV is:
\begin{equation}\label{eq:the1-1}
    RA = \frac{|S_{\text{CRV}}|}{|\mathcal{X}|}.
\end{equation}

By properties of set union and disjoint differences, we simplify~\eqref{eq:the1-1} as:
\begin{equation}\label{eq:the1-2}
    RA = \frac{\left|\bigcup\limits_{i=1}^{t} S_{tp_i}\right|}{|\mathcal{X}|}.
\end{equation}

From standard set cardinality bounds, we have:
\begin{equation}\label{eq:the1-3}
    \max_{1 \leq i \leq t} |S_{tp_i}| \leq \left|\bigcup\limits_{i=1}^{t} S_{tp_i}\right| \leq \sum_{i=1}^{t} |S_{tp_i}|.
\end{equation}
Dividing by \( |\mathcal{X}| \), we obtain:
\begin{equation}\label{eq:the1-4}
    \max_{1 \leq i \leq t} RA_i \leq RA \leq \sum_{i=1}^{t} RA_i.
\end{equation}

\noindent\textbf{Verification Cost.} Let \(T_j\) denote the average runtime per input for method \(V_j\). TVC under CRV is the sum of costs incurred at each stage. Initially, all inputs are verified by \(V_1\), costing \( |\mathcal{X}| T_1 \). For each subsequent method \(V_j\), only the inputs not yet verified as robust are passed forward:
\begin{equation}
    TVC = |\mathcal{X}| T_1 + \left|\mathcal{X} \setminus S_{tp_1}\right| T_2 + \cdots + \left|\mathcal{X} \setminus \bigcup_{i=1}^{t-1} S_{tp_i}\right| T_t.
\end{equation}
This concludes the proof.
\end{proof}

The CRV framework described so far is limited when earlier verifiers in the sequence verify many inputs as non-robust, leading to overreliance on more expensive verification methods. To reduce computational cost for such scenarios, we introduce \emph{SR}, a strategy that incrementally tightens bounds within each method, aiming to further reduce false negatives and minimize computational overhead within each method.

\subsection{Stepwise Relaxation Algorithm (SR)}\label{sec:Stepwise-Relaxation}
SR arranges the native relaxation components of a verifier into a fixed sequence of constraint blocks and applies them progressively from the loosest (fastest) to the tightest (slowest).  Concretely, each verifier \( V_j \), consists of a sequence of submethods \( \{V_{j1}, \dots, V_{jk_j}\} \), where each \( V_{jm} \) solves~\eqref{eq:verification-method-incomplete} using the constraint set \(\bigcap\limits_{i = 1}^{m} \mathcal{C}_i\). This ordering is \emph{verifier-specific} and fixed a priori (i.e., it is not data- or input-dependent), similar to how verifiers are ranked in CRV.
 SR  starts from its loosest submethod \(V_{j1}\) and incrementally tightens the relaxation constraints, at increasing computational cost, through submethods \(V_{j2}, \dots, V_{jk_j}\).

SR is based on the principle that adding constraints, which makes the feasible set smaller for an optimization problem, leads to tighter bounds, typically at the expense of increased computational cost, as stated in the following proposition.

\begin{proposition}\label{proposition-1}
Let two maximization problems \(A\) and \(B\) have the same objective with feasible sets \(\mathcal{F}_B \subseteq \mathcal{F}_A\). Then the optimal value \(L_A\) of problem \(A\) is an upper bound on the optimal value \(L_B\) of problem \(B\); that is, \(L_A \geq L_B\).
\end{proposition}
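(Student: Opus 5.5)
The plan is to argue directly from the definition of the optimal value of a maximization problem as a supremum over its feasible set. Write the common objective as $g$, so that $L_A = \sup_{z \in \mathcal{F}_A} g(z)$ and $L_B = \sup_{z \in \mathcal{F}_B} g(z)$. I use suprema rather than maxima so the argument holds whether or not the optimum is attained. This matters in the incomplete-verification setting of~\eqref{eq:verification-method-incomplete}, where a relaxation may be unbounded or have non-attained optima.

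The key step is a pointwise comparison. Take any $z \in \mathcal{F}_B$. Since $\mathcal{F}_B \subseteq \mathcal{F}_A$, we also have $z \in \mathcal{F}_A$, so $g(z) \le \sup_{z' \in \mathcal{F}_A} g(z') = L_A$. Thus $L_A$ is an upper bound on $\{g(z) : z \in \mathcal{F}_B\}$. Because the supremum is the least upper bound, $L_B \le L_A$.

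I would then dispose of the degenerate cases using the conventions in the extended reals. If $\mathcal{F}_B = \emptyset$, then $L_B = -\infty$ and the inequality is trivial. If $L_A = +\infty$, the inequality is also trivial. No other case needs separate treatment.

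Finally, I would connect this to SR. Applying the proposition with $\mathcal{F}_A = \bigcap_{i=1}^{m}\mathcal{C}_i$ and $\mathcal{F}_B = \bigcap_{i=1}^{m+1}\mathcal{C}_i$ shows that the bounds produced by the submethods $V_{j1}, V_{j2}, \dots$ are monotonically non-increasing. There is no genuine obstacle here; the only care needed is to state the empty-set and unbounded cases so that the claim holds without assuming attainment.
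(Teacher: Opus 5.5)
Your proof is correct and is essentially the paper's argument: a point feasible for $B$ is also feasible for $A$, so its objective value is at most $L_A$. The one difference is that you take the supremum over all of $\mathcal{F}_B$ instead of evaluating at an optimizer $x_B^\star$. This removes the paper's implicit assumption that $B$ attains its optimum, and it covers the empty and unbounded cases, which is a mild but genuine improvement for SDP relaxations.
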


\noindent The proof follows from~\cite[Chapter 5]{boyd2004convex} and is provided in Appendix~\ref{sec:proposition-1}. This proposition directly informs our certification problem, where we aim to find an upper bound \(L(y, y')\) on the optimal value \(l^\star(y, y')\) in~\eqref{eq:verification-method}, leading to the following corollary.

\begin{corollary}\label{corollary-1}
Let verification submethods \(A\) and \(B\) of \(V_j\), with feasible sets satisfying \(\mathcal{F}_{B} \subseteq \mathcal{F}_{A}\), then:
\begin{enumerate}
    \item \(S_{\mathrm{fn}_{B}} \subseteq S_{\mathrm{fn}_{A}}\): Verification submethod \(V_{B}\) produces fewer false negatives.
    \item \(S_{\mathrm{tp}_{A}} \subseteq S_{\mathrm{tp}_{B}}\):  Verification submethod \(V_{B}\) certifies more robust inputs.
\end{enumerate}
\end{corollary}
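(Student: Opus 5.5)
The plan is to apply Proposition~\ref{proposition-1} to the pair of relaxed problems and then propagate the resulting inequality through Definition~\ref{def:tfn}. Fix an input $x \in \mathcal{X}$ with predicted label $y$ and a target class $y' \neq y$. Submethods $A$ and $B$ of $V_j$ solve~\eqref{eq:verification-method-incomplete} with the same objective $f(x')_{y'} - f(x')_{y}$ over feasible sets $\mathcal{F}_B \subseteq \mathcal{F}_A$. Proposition~\ref{proposition-1} then gives $L_A(y,y') \geq L_B(y,y')$. Both are relaxations of~\eqref{eq:verification-method}, so $L_A(y,y') \geq L_B(y,y') \geq l^\star(y,y')$. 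This pointwise chain, valid for every $x$ and every $y'$, is the only analytic input the argument needs.

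For item 2, take $x \in S_{\mathrm{tp}_A}$. By Definition~\ref{def:tfn}, $l^\star(y,y') < 0$ and $L_A(y,y') < 0$ for all $y' \neq y$. The chain gives $L_B(y,y') \leq L_A(y,y') < 0$, while the condition $l^\star(y,y') < 0$ does not depend on the verifier. Hence $x \in S_{\mathrm{tp}_B}$, which gives $S_{\mathrm{tp}_A} \subseteq S_{\mathrm{tp}_B}$.

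For item 1, take $x \in S_{\mathrm{fn}_B}$, so $x$ is truly robust and $B$ fails to certify it, i.e.\ $L_B(y,y') \geq 0$ for some $y'$. For that same $y'$, $L_A(y,y') \geq L_B(y,y') \geq 0$, so $A$ also fails and $x \in S_{\mathrm{fn}_A}$. Equivalently, one can argue through the partition: the truly robust inputs $\{x : l^\star < 0\}$ are fixed independently of the verifier and split disjointly as $S_{\mathrm{tp}} \cup S_{\mathrm{fn}}$, using that $S_{fp} = \emptyset$ by the remark above. Taking complements within this fixed set turns item 2 directly into item 1.

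The main obstacle is interpretive rather than technical. Definition~\ref{def:tfn} is written with a single pair $(y,y')$, whereas certification of an input requires $L(y,y') < 0$ for all $y' \neq y$. I will make this quantifier explicit: robust means for all $y'$, and failing to certify means there exists a $y'$. I will then check that the monotonicity survives it. It does, because the inequality $L_A \geq L_B$ holds separately for each $y'$, so it carries over to both the "for all" and the "there exists" forms.

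A further subtlety is that the statement speaks of optimal values. If a solver does not converge, as noted earlier for SDP methods, the comparison concerns exact optima, not numerical outputs. I will state this caveat and otherwise rely only on Proposition~\ref{proposition-1}.
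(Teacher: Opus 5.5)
Your proposal is correct and takes essentially the same approach as the paper. The paper also applies Proposition~\ref{proposition-1} to get $L_A \geq L_B$ for every input, then pushes this inequality through Definition~\ref{def:tfn}, so that a false negative of $B$ is one of $A$ and a true positive of $A$ is one of $B$. Your explicit handling of the quantifier over $y' \neq y$ (``for all'' for certification, ``there exists'' for failure to certify) and your complement-within-robust-inputs argument refine what the paper's proof leaves implicit, rather than giving a different route.
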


\noindent The proof for this corollary is provided in Appendix~\ref{sec:corollary-1}.  

\begin{algorithm}[H]
    \caption{$\texttt{RobustnessCheck}(x, W, V_j, S_{tp})$}
    \label{alg:SR}
    \begin{algorithmic}[1]
    \Statex \textbf{Inputs:} input \(x\); model parameters \(W\); \(V_j\) with submethods \(\{V_{j1}, \dots, V_{jk_j}\}\); true positives \(S_{tp}\) 
    \Statex \textbf{Output:} updated true positives \(S_{tp}\)
        \For{each submethod \( V_{jm} \in V_j\)}
            \If{ \( x \) is certified by \( V_{jm} \) }
                \State \( S_{tp} \gets S_{tp} \cup \{x\} \) \Comment{if \(x\) verified as robust}
                \State \Return \(S_{tp}\) \Comment{Return to Alg.~\ref{alg:CRV}}
            \EndIf
        \EndFor
        \State \Return \(S_{tp}\) \Comment{Return to Alg.~\ref{alg:CRV} if \(x\) not verified as robust}
    \end{algorithmic}
\end{algorithm}

Empirically, we demonstrate this by comparing SDP-cert (\(f_{\text{SDP}_2}\)) from~\cite{raghunathan2018semidefinite} and SDP-cert-loose (\(f_{\text{SDP}_1}\)), where the latter includes less constraints and thus yields looser bounds. By Proposition~\ref{proposition-1}, \(f_{\text{SDP}_1} \ge f_{\text{SDP}_2}\), and Corollary~\ref{corollary-1} implies that SDP-cert produces fewer false negatives. Table~\ref{tab:tightness-comparison-epsilon-0.1-method1&2} presents empirical results on the same input and network: because SDP-cert-loose uses a looser relaxation, it can declare some robust inputs non-robust, whereas SDP-cert returns tighter (more negative) bounds and correctly certifies those inputs. For example, for true class 8 versus adversarial class 2, SDP-cert-loose yields a positive bound (non-robust) while SDP-cert yields a negative bound (robust), consistent with Corollary~\ref{corollary-1}. Additional explanations and formulas for the relaxations are given in Appendix~\ref{sec:limitation}.

\begin{table}
    \caption{Comparison of SDP-cert-loose and SDP-cert on a single input (true class: 8). A checkmark (\cmark) indicates the input verified as robust (i.e., negative upper bound (value)).}
  \label{tab:tightness-comparison-epsilon-0.1-method1&2}
  \centering
  \resizebox{\columnwidth}{!}{
  \begin{tabular}{lllll}
    \toprule
     & \multicolumn{2}{c}{SDP-cert-loose} & \multicolumn{2}{c}{SDP-cert}\\
    \cmidrule(r){1-1}\cmidrule(r){2-3}\cmidrule(r){4-5}
    Adv Class & Value     & Robust     & Value     & Robust \\
    \midrule
        0 & -0.4185 & \cmark & -0.5873 & \cmark \\
        1 & -0.1374 & \cmark & -0.4687 & \cmark \\
        \textbf{2} & \textbf{0.0355} & \textbf{\xmark} & \textbf{-0.1793} & \textbf{\cmark} \\
        3 & -0.2148 & \cmark & -0.4362 & \cmark \\
        4 & -0.1795 & \cmark & -0.5633 & \cmark \\
        5 & -0.5238 & \cmark & -0.7848 & \cmark \\
        6 & -0.4598 & \cmark & -0.6963 & \cmark \\
        7 & -0.3191 & \cmark & -0.5726 & \cmark \\
        9 & -0.1940 & \cmark & -0.5892 & \cmark \\
    \bottomrule
  \end{tabular}}
\end{table}

Algorithm~\ref{alg:SR} summarizes this procedure. For each input \(x \in \mathcal{X}\), the process starts with \(V_{j1}\). If the input is verified as robust, it is added to the set of true positives \(S_{tp}\) and verification terminates for that input. Otherwise, the next tighter submethod is applied. This continues until the input is either verified as robust, at which point it is added to \(S_{tp}\), or deemed non-robust by the tightest submethod \(V_{jk_j}\). Since each relaxation is at least as tight as its predecessor, Corollary~\ref{corollary-1} ensures that the resulting robust accuracy is guaranteed to improve or remain unchanged, compared to using a single fixed submethod \(V_{jm}\) across the entire dataset.
Fig.~\ref{fig:SR} illustrates a case where a robust input is initially verified as non-robust by a loose submethod \(V_{j1}\), but identified as robust under a tighter submethod \(V_{j2}\), so further verification with even tighter and more expensive constraints such as \(V_{j3}\) is unnecessary. 

By capitalizing on early exits, once verified as robust by \( V_{ji} \) (\( i < k_j \)), we avoid incurring the full cost of the tightest method \(V_{jk_j}\) while still achieving its level of robust accuracy. 
The robust accuracy achieved by SR is characterized in the following theorem.

\begin{theorem}[Robust Accuracy of SR]\label{proof-theorem-stepwise}
Let \( \{V_{j1}, \dots, V_{jk_j}\} \) be a sequence of increasingly tighter relaxations derived from method \(V_j\), with associated robust accuracy values \( \{RA_{j1}, \dots, RA_{jk_j}\} \) and true positive sets \( \{S_{tp_{j1}}, \dots, S_{tp_{jk_j}}\} \), then the overall robust accuracy of SR is
\[
RA_j \triangleq \frac{|S_{tp_{jk_j}}|}{|\mathcal{X}|}.
\]
\end{theorem}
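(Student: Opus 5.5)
The plan is to mirror the proof of Theorem~\ref{proof-theorem-crv}, viewing SR as a cascade over the submethods $V_{j1},\dots,V_{jk_j}$, and then to use the nested structure of the constraint sets to collapse the union into a single set. Algorithm~\ref{alg:SR} applies $V_{j1}$ to every input. It passes $\mathcal{X}\setminus S_{tp_{j1}}$ to $V_{j2}$, and so on, stopping at the first certifying submethod. The set certified by SR is therefore
\[
S_{\mathrm{SR}} = S_{tp_{j1}} \cup \bigl(S_{tp_{j2}}\setminus S_{tp_{j1}}\bigr) \cup \cdots \cup \Bigl(S_{tp_{jk_j}}\setminus \bigcup_{i=1}^{k_j-1} S_{tp_{ji}}\Bigr) = \bigcup_{i=1}^{k_j} S_{tp_{ji}},
\]
exactly as in the CRV argument. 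The one point to note is that membership of $x$ in $S_{tp_{ji}}$ is a property of $x$ and $V_{ji}$ alone, independent of whether earlier submethods were run. So the sequential procedure certifies $x$ if and only if some $V_{ji}$ certifies it.

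Next I would establish the chain $S_{tp_{j1}}\subseteq S_{tp_{j2}}\subseteq\cdots\subseteq S_{tp_{jk_j}}$. By construction, $V_{jm}$ solves \eqref{eq:verification-method-incomplete} over the feasible set $\bigcap_{i=1}^{m}\mathcal{C}_i$. Hence the feasible set of $V_{j(m+1)}$ is contained in that of $V_{jm}$, and both problems have the same objective $f(x')_{y'}-f(x')_y$. Proposition~\ref{proposition-1} then gives $L_{j(m+1)}(y,y')\le L_{jm}(y,y')$ for every $y'\neq y$. By part 2 of Corollary~\ref{corollary-1}, $S_{tp_{jm}}\subseteq S_{tp_{j(m+1)}}$. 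Induction on $m$ yields the full chain, and therefore $\bigcup_{i=1}^{k_j}S_{tp_{ji}} = S_{tp_{jk_j}}$. Dividing by $|\mathcal{X}|$ gives $RA_j = |S_{tp_{jk_j}}|/|\mathcal{X}|$. As a by-product, $RA_j=\max_i RA_{ji}$, consistent with Theorem~\ref{proof-theorem-crv}.

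The main obstacle is making sure the monotonicity is applied at the level of the certification criterion rather than a single margin. An input is certified only when $L(y,y')<0$ for \emph{all} $y'\neq y$. I would therefore apply Proposition~\ref{proposition-1} separately to each $y'$ and then take the conjunction. Because the bounds decrease pointwise in $y'$, negativity of all bounds under $V_{jm}$ implies negativity under $V_{j(m+1)}$. A second subtlety is the implicit assumption that each submethod is solved to optimality. If a solver fails to converge, as the paper notes can happen for SDP, the inclusion could break. I would state exact optimality explicitly as a standing assumption, which is also what Corollary~\ref{corollary-1} presupposes.
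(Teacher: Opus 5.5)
Your proposal is correct and follows the paper's proof essentially step for step: write the SR-certified set as the union of incremental differences $S_{tp_{j1}}\cup(S_{tp_{j2}}\setminus S_{tp_{j1}})\cup\cdots$, use Corollary~\ref{corollary-1} to get the nested chain $S_{tp_{j1}}\subseteq\cdots\subseteq S_{tp_{jk_j}}$, and collapse the union to $S_{tp_{jk_j}}$. Applying Proposition~\ref{proposition-1} separately to each adversarial class $y'$, and assuming each submethod is solved to optimality, makes explicit what the paper leaves implicit without changing the argument.
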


\begin{proof}
Let \( V_{j1}, V_{j2}, \dots, V_{jk_j} \) denote an ordered sequence of methods derived from the base robustness verification method \( V_j \). Let the corresponding robust accuracies be \( RA_{j1}, RA_{j2}, \dots, RA_{jk_j} \), and the associated sets of true positives be \( S_{tp_{j1}}, S_{tp_{j2}}, \dots, S_{tp_{jk_j}} \).

Under SR, verification proceeds sequentially. \(V_{j1}\) is applied first to certify \(S_{tp_{j1}}\). The remaining inputs \( \mathcal{X} \setminus S_{tp_{j1}} \) are passed to \(V_{j2}\), which certifies \(S_{tp_{j2}} \setminus S_{tp_{j1}}\), and so on. The overall robust accuracy can be expressed as the union of incremental improvements:
\begin{equation*}
    RA_j = \frac{\bigl\lvert S_{tp_{j1}} \cup \bigl(S_{tp_{j2}} \setminus S_{tp_{j1}}\bigr) \cup \cdots \cup \bigl(S_{tp_{jk_j}} \setminus \bigcup_{i=1}^{k_j-1}S_{tp_{ji}}\bigr) \bigr\rvert}{\lvert \mathcal{X} \rvert}.
\end{equation*}

By Corollary~\ref{corollary-1}, the sequence of true positive sets is nested:
\[
S_{tp_{j1}} \subseteq S_{tp_{j2}} \subseteq \cdots \subseteq S_{tp_{jk_j}},
\]
which implies that each union simplifies progressively:
\[
S_{tp_{j1}} \cup (S_{tp_{j2}} \setminus S_{tp_{j1}}) = S_{tp_{j2}}.
\]
Applying this iteratively yields:
\begin{equation*}
    RA_j = \frac{\lvert S_{tp_{jk_j}} \rvert}{\lvert \mathcal{X} \rvert},
\end{equation*}
which is equivalent to \( RA_{jk_j} \).
\end{proof}

To further reduce the runtime of SR, we introduce FSR. The key idea is detailed in the following.

\noindent\textbf{Fast Stepwise Relaxation Algorithm (FSR).}  
To further improve efficiency, we introduce FSR, which augments SR with mechanisms to skip unnecessary computations. The key idea is to avoid applying expensive relaxations or constraints when they are unlikely to improve the bound meaningfully. FSR uses two complementary strategies:

\begin{itemize}
    \item \textbf{Redundant-Constraint Pruning.}  
    During preprocessing, constraints are analyzed for overlaps and redundancies. If one constraint’s feasible region is contained within the others, it can be discarded. Likewise, quadratic relaxations that provide negligible tightening are omitted. These checks reduce optimization complexity without affecting correctness.

    \item \textbf{Early Termination via Thresholding.}  
    Each relaxation step is first tested with a lightweight bound-improvement estimate on a subset of inputs. If the gain is below a fixed threshold (e.g., $5\%$), the step is skipped. This ensures that tighter relaxations are applied only when they promise a meaningful improvement.
\end{itemize}

Both components of FSR are verifier-independent in principle: redundant-constraint pruning can be incorporated when a verifier exposes sufficient structural information about its relaxations, whereas early termination via thresholding is fully empirical and provides a robust default when such structural cues are unavailable. This ensures that FSR remains broadly applicable beyond the specific verifiers considered in this work. By combining these strategies, FSR significantly reduces runtime while maintaining certified robustness. Empirical validation of this approach is presented in Section~\ref{sec:experiment} and Appendix~\ref{sec:appendix-fsr}.

\vspace{0.1cm}
\noindent\textbf{Computational Cost.}
In practice, most verification methods yield a small number of submethods (typically \(k \in \{2, 3, 4\}\)), ordered by increasing computational cost: \(V_{j1}, \dots, V_{jk}\), where \(V_{j1}\) is the fastest and \(V_{jk}\) is the most expensive. Let \(T_{jm}\) denote the average runtime of submethod \(V_{jm}\). SR reduces TVC by certifying inputs earlier using looser submethods. 

Let \(\mathcal{X}\) be the set of all inputs. TVC for a $k$-stage relaxation is computed as:
\begin{equation}
    TVC = \lvert \mathcal{X} \rvert T_{j1} + \lvert \mathcal{X} \setminus S_{tp_{j1}} \rvert T_{j2} + \cdots + \lvert \mathcal{X} \setminus S_{tp_{j(k-1)}} \rvert T_{jk},
\end{equation}
where \( T_{ji} \) is the average runtime of \( V_{ji} \), and \(S_{tp_{ji}}\) is the set of inputs certified by that method.

The baseline cost of applying the tightest submethod \(V_{jk}\) to all inputs is \(\lvert \mathcal{X} \rvert T_{jk}\). The relative speedup is then:
\begin{equation}
    \Delta T = \frac{\lvert \mathcal{X} \rvert T_{jk} - TVC}{\lvert \mathcal{X} \rvert T_{jk}}.
\end{equation}

For illustration, assume a 3-stage process with runtimes \(T_{j1} < T_{j2} < T_{j3}\), and 40\% of inputs verified at each of the first two stages:
\[
    TVC \approx \lvert \mathcal{X} \rvert T_{j1} + 0.6 \lvert \mathcal{X} \rvert T_{j2} + 0.2 \lvert \mathcal{X} \rvert T_{j3}.
\]
\begin{equation}
\begin{aligned}
    \Delta T & = \frac{\lvert \mathcal{X} \rvert T_{j3} - (\lvert \mathcal{X} \rvert T_{j1} + 0.6 \lvert \mathcal{X} \rvert T_{j2} + 0.2 \lvert \mathcal{X} \rvert T_{j3})}{\lvert \mathcal{X} \rvert T_{j3}} \\
    & = \frac{T_{j3} - (T_{j1} + 0.6 T_{j2} + 0.2 T_{j3})}{T_{j3}}.
\end{aligned}
\end{equation}

Assuming \(T_{j3} > T_{j2} > T_{j1}\), with \(\frac{T_{j2}}{T_{j3}} = \frac{1}{2}\) and \(\frac{T_{j1}}{T_{j2}} = \frac{1}{2}\), we get:
\begin{equation*}
    \Delta T = \frac{T_{j3} - 0.75 T_{j3}}{T_{j3}} = 0.25.
\end{equation*}
As estimated above, using SR can lead to a 25\% improvement in verification time (relative to \(T_{j3}\)). Additionally, the speedup can be further improved, reaching up to 50\% or more, by skipping intermediate submethods that yield minimal improvement over the previous method. For instance, if submethod \(V_{ji}\) offers less than \(5\%\) bound improvement over \(V_{j(i-1)}\) on a small sample, we can bypass \(V_{ji}\) and move on to \(V_{j(i+1)}\). Empirically, SR achieves up to a \textbf{32\%} speedup with no drop in robust accuracy, while FSR reaches up to \textbf{53\%} speedup with negligible accuracy loss (Appendix~\ref{sec:appendix-experiemt-eps}).


\section{Experiments}\label{sec:experiment}
We evaluate the CRV framework and its components on robustly trained neural networks, comparing against standard certification baselines. Robust accuracy (RA) and runtime are measured under \( \ell_{\infty} \)-norm attacks with \( \varepsilon \in \{0.1, 0.15, 0.2, 0.25\} \). Due to space limitations, we report detailed results for \( \varepsilon = 0.1\) in the main text, along with summary plots showing trends across varying \( \varepsilon \). Full results are provided in Appendix~\ref{sec:appendix-experiemt-eps}.

\noindent\textbf{Networks.}
We evaluate the robustness of two different NN models trained on the MNIST dataset using distinct robust training procedures:  
\begin{enumerate}
    \item \textbf{Grad-NN.} This is a two-layer fully connected NN with 500 hidden nodes, originally introduced in \cite{raghunathan2018semidefinite}. The network was trained using an SDP-based regularization technique that bounds the gradient of the network to enhance robustness. 
    \item \textbf{LP-NN.} This network has the same architecture as Grad-NN but was trained using the LP-based robust training method proposed in~\cite{wong2018provable}, with weights provided by the authors.
\end{enumerate}
Both models are independent of the SDP verifier (SDP-cert) used in our evaluation, ensuring that SDP certification remains agnostic to the training procedure.
\begin{table*}[!t]
  \caption{Certification results on MNIST under \(l_{\infty}\)-norm attacks at \(\varepsilon = 0.10\). RA, average runtime per sample, and speedup (relative to SDP-cert) are reported. LP-cert is fast and, since its runtime (per sample) is significantly lower than SDP-based methods and not reported in prior work~\cite{wong2018scaling, dathathri2020enabling, li2023sok}, we omit it from the runtime and speedup analysis.} 
  \label{tab:CRV}
  \centering
  \resizebox{0.95\textwidth}{!}{
  \begin{tabular}{lcccccc}
    \toprule
    & \multicolumn{3}{c}{Grad-NN} & \multicolumn{3}{c}{LP-NN}\\
    \cmidrule(r){1-1} \cmidrule(r){2-4} \cmidrule(r){5-7}
    Verifier & RA & Runtime (min) & Speedup & RA & Runtime (min) & Speedup\\
    \midrule
        SDP-cert~\cite{raghunathan2018semidefinite}  & 88\% & 328.98 & - & 88\% & 328.98 & - \\
        LP-cert~\cite{wong2018provable}  & 42\% & - & - & 82\% & - & - \\
        CRV  & 88\% & 190.8 & 42.00\% & 88\% & 59.22 & 82.00\% \\
        CRV-SR & 88\% & 134.23 & 59.20\% & 88\% & 41.66 & 87.34\% \\
        \textbf{CRV-FSR} & \textbf{88\%} & \textbf{111.34} & \textbf{66.16\%} & \textbf{88\%} & \textbf{34.55} & \textbf{89.5\%} \\
        PGD Success
        & 10\%  & -      & -     & 12\%  & -      & -     \\
        \bottomrule
    \end{tabular}}
\end{table*}

\vspace{0.1cm}
\noindent\textbf{Certification Procedures.}  
Certification is performed by computing an upper bound on the certification problem, following CRV approach outlined in Section~\ref{sec:crv}. We compare three different certification methods:  
\begin{enumerate}
    \item \textbf{CRV.} The proposed CRV method computes a hybrid upper bound by leveraging both SDP and LP relaxations. Instead of applying the same method to all inputs, CRV first uses LP-cert, a computationally efficient but less precise method, to quickly filter out robust samples that can be verified by less tight overapproximations. For inputs that remain non-robust, CRV then applies SDP-cert, a tighter but computationally more expensive method, to re-verify their robustness~\cite{vandenberghe1996semidefinite, lasserre2002semidefinite, lasserre2004sdp, de2006aspects, yurtsever2021scalable}.     
    \item \textbf{LP-cert.} A certification method based on the LP relaxation, which has been widely used in prior scalable certification approaches~\cite{wong2018provable}. 
    \item \textbf{SDP-cert.} The certification method introduced in~\cite{raghunathan2018semidefinite} derives an upper bound using the maximum norm of the gradient of the network predictions.
\end{enumerate}

\noindent\textbf{Optimization Setup.} The SDP-based certification procedure solves convex problems using the YALMIP toolbox~\cite{lofberg2004yalmip}, with MOSEK as the solver. The LP-based certification is implemented in PyTorch, using standard modules and optimization routines from \texttt{torch.nn} and \texttt{torch.optim}. All experiments, including SDP and LP methods, are conducted on the same 6-core CPU. On our server, the average SDP-cert computation time is approximately 35 minutes per adversarial class on the MNIST dataset. In the worst case, robustness verification for a single input requires checking up to nine adversarial classes.

\noindent\textbf{Dataset Choice for Evaluation.}  
As highlighted in prior work such as \cite{yang2020closer}, meaningful perturbation level $\varepsilon$ for datasets varies significantly due to differences in their data geometry. For instance, the train–train separation, defined as the distance between each training example and its nearest neighbor from a different class in the training set, is about $0.212$ for CIFAR-10 and $0.737$ for MNIST. In practice, typical adversarial attack radii are much smaller, approximately $0.03$ for CIFAR-10 and $0.1$ for MNIST, indicating that CIFAR-10 requires smaller perturbations to produce realistic adversarial changes. As a result, when evaluating across a range of $\varepsilon$ values on complex datasets such as CIFAR-10 or ImageNet, the robustness space is narrow, and differences between verification methods or training strategies become harder to detect. This limited variation in robust accuracy can obscure the benefits of our approach. While we do not have any restrictions preventing us from experimenting on more complex datasets, for this study, we focused on experiments that align with the benchmarks used in prior papers, which are well-suited to addressing the specific limitation discussed here.

\begin{table}[t]
  \caption{Certified robustness on MNIST under \(l_{\infty}\) perturbations at \(\varepsilon = 0.1\). RA, average runtime per sample with standard deviation, and speedup (relative to \(V_{13}\)) are reported.}
  \label{tab:SR}
  \centering
  \resizebox{0.48\textwidth}{!}{
  \begin{tabular}{lccc}
    \toprule
    Verifier & RA & Runtime (min) & Speedup\\
    \midrule 
        \(V_{11}\) & 42\% & 103.56 $\pm$ 65.52 & -\\
        \(V_{12}\) & 88\% & 152.41 $\pm$ 34.23 & -\\
        \(V_{13}\)~\cite{raghunathan2018semidefinite} & 88\% & 328.98 $\pm$ 74.71 & -\\
        SR & 88\% & 231.43 $\pm$ 65.10 & 29.65\% \\
        \textbf{FSR} & \textbf{88\%} & \textbf{191.96 $\pm$ 61.48} & \textbf{41.65\%}\\
    \bottomrule
  \end{tabular}}
\end{table}

\noindent\textbf{Model-Agnostic Certification Guarantee.} Verifier–training misalignment can bias certification outcomes. For example, as shown in Table~\ref{tab:CRV}, LP-cert yields a robust accuracy of $42\%$ for Grad-NN (which follows an SDP-based training method) and $88\%$ for LP-NN (trained to favor LP-based verifiers). Furthermore, when both networks are verified using an independent method not involved in training (SDP-cert), their robust accuracy is identical. This highlights the limitation of relying on a single verifier for robustness evaluation. CRV addresses this by combining multiple verifiers, thereby reducing the influence of any individual misalignment and returning the best estimate of TRA with a significant speedup. In this context, \textit{model-agnostic} refers to CRV’s ability to measure robustness independent of the training procedure, achieved by aggregating diverse verification perspectives to ensure fairer and more reliable certification.

\noindent\textbf{CRV.} As shown in Table~\ref{tab:CRV}, CRV achieves the same RA as SDP-cert (88\%) on Grad-NN while reducing runtime by \textbf{42\%}. On LP-NN, CRV improves RA from 82\% (LP-cert) to \textbf{88\%}, and reduces runtime by \textbf{82\%} compared to SDP-cert. These results demonstrate that CRV effectively improves RA and scalability across models with different training objectives.
\begin{table*}[h!]
    \caption{Comparison of \(V_{13}\) and \(V_{12}\) on a single input (true class: 3) at \(\varepsilon = 0.1\). A checkmark (\cmark) indicates the input verified as robust (i.e., negative upper bound).}
  \label{tab:FSR-b-1}
  \centering
  \resizebox{0.9\textwidth}{!}{
    \begin{tabular}{lccccccc}
    \toprule
     & \multicolumn{4}{c}{Optimal Answer} & \multicolumn{3}{c}{Runtime(sec)}\\
    \cmidrule(r){1-1}\cmidrule(r){2-5}\cmidrule(r){6-8}
    Adv. Class & \(V_{13}\) & \(V_{12}\) & Robust & Loss (\%) & \(V_{13}\) & \(V_{12}\) & Impv.(\%)\\
    \midrule
        0 & -0.5671 & -0.5670 & \cmark & 0.018 &  34.84 & 15.68 & 55.00\\
		1 & -0.4481 & -0.4480 & \cmark & 0.022 & 37.47  & 12.79 & 65.86\\
		2 & -0.5333 & -0.5332 & \cmark & 0.019 & 38.25 & 12.83 & 66.46\\
		4 & -0.6568 & -0.6567 & \cmark & 0.015 & 36.39 & 12.88  & 64.60\\
		5 & -0.4302 & -0.4301 & \cmark & 0.023 & 38.06 & 13.54 & 64.42\\
		6 & -0.8906 & -0.8904 & \cmark & 0.022 & 34.93 & 18.62 & 46.70\\
		7 & -0.6178 & -0.6177 & \cmark & 0.016 & 30.91 & 15.39 & 50.20\\
		8 & -0.4319 & -0.4318 & \cmark & 0.023 & 32.16 & 17.23 & 46.40\\
		9 & -0.7056 & -0.7055 & \cmark & 0.014 & 32.05 & 15.65 & 51.17\\
    \bottomrule
  \end{tabular}}
\end{table*}

\noindent\textbf{SR.}
We evaluate SR and FSR on Grad-NN. Table~\ref{tab:SR} presents RA, average runtime per input, and per-SDP runtime. In SR, we begin by using verifier \(V_{11}\), whose constraint set is generated by omitting the quadratic constraint and two linear constraints \(z \geq 0\) and \(z \geq x\) from the original SDP-cert verifier proposed in ~\cite{raghunathan2018semidefinite}. \(V_{12}\) is formulated by adding back the inequality \(z \geq 0\) to the constraint set. Finally, \(V_{13}\) is formulated by restoring all the constraints, yielding the original SDP-cert method, which is the tightest relaxation.
As shown, \(V_{13}\) achieves the highest RA (88\%) but requires a substantial runtime of 328.98 minutes per sample. SR maintains this accuracy while reducing the runtime by \textbf{29.65\%}, and FSR further improves efficiency with a \textbf{41.65\%} speedup. In contrast, the fastest submethod \(V_{11}\) achieves only 42\% RA. These results highlight that selectively relaxing constraints can substantially reduce verification time without compromising certification quality.

\noindent\textbf{FSR.}  
We empirically validate the effectiveness of FSR by comparing different versions of the verifier, specifically $V_{12}$ and $V_{13}$. Results show that FSR typically reduces runtime by $50$--$60\%$ while altering the certified robustness bounds by less than $0.5\%$. This confirms that the proposed strategy achieve substantial speedups with negligible loss in accuracy.  
Table~\ref{tab:FSR-b-1} illustrates this effect on a representative input, showing that $V_{12}$ consistently achieves similar bounds to $V_{13}$ at a fraction of the computational cost. Additional results and detailed comparisons are provided in Appendix~\ref{sec:appendix-fsr}.  

\noindent\textbf{CRV-SR.}
The full CRV framework integrates SR to optimize the trade-off between verification speed and bound quality. On both Grad-NN and LP-NN, CRV-SR matches the RA of SDP-cert while significantly reducing runtime, achieving a \textbf{59.2\%} speedup on Grad-NN and \textbf{87.34\%} on LP-NN, as shown in Table~\ref{tab:CRV}. Incorporating FSR further improves efficiency, with CRV-FSR reaching up to \textbf{66.16\%} speedup on Grad-NN and \textbf{89.5\%} on LP-NN.

\begin{figure*}[t]
    \centering 
    \begin{subfigure}{0.328\textwidth} 
        \centering 
        \includegraphics[width=\textwidth]{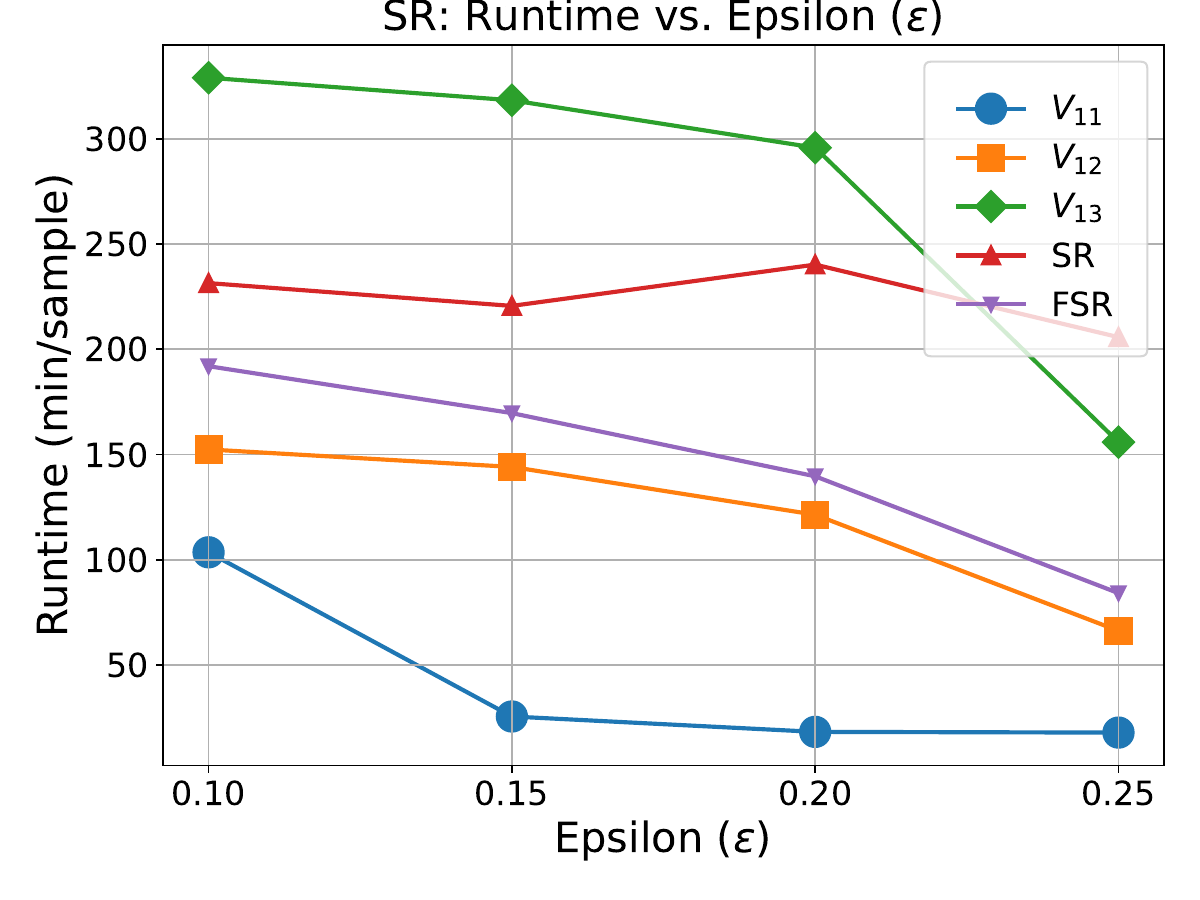} 
        \caption{}
    \end{subfigure} 
    \begin{subfigure}{0.328\textwidth}
        \centering 
        \includegraphics[width=\textwidth]{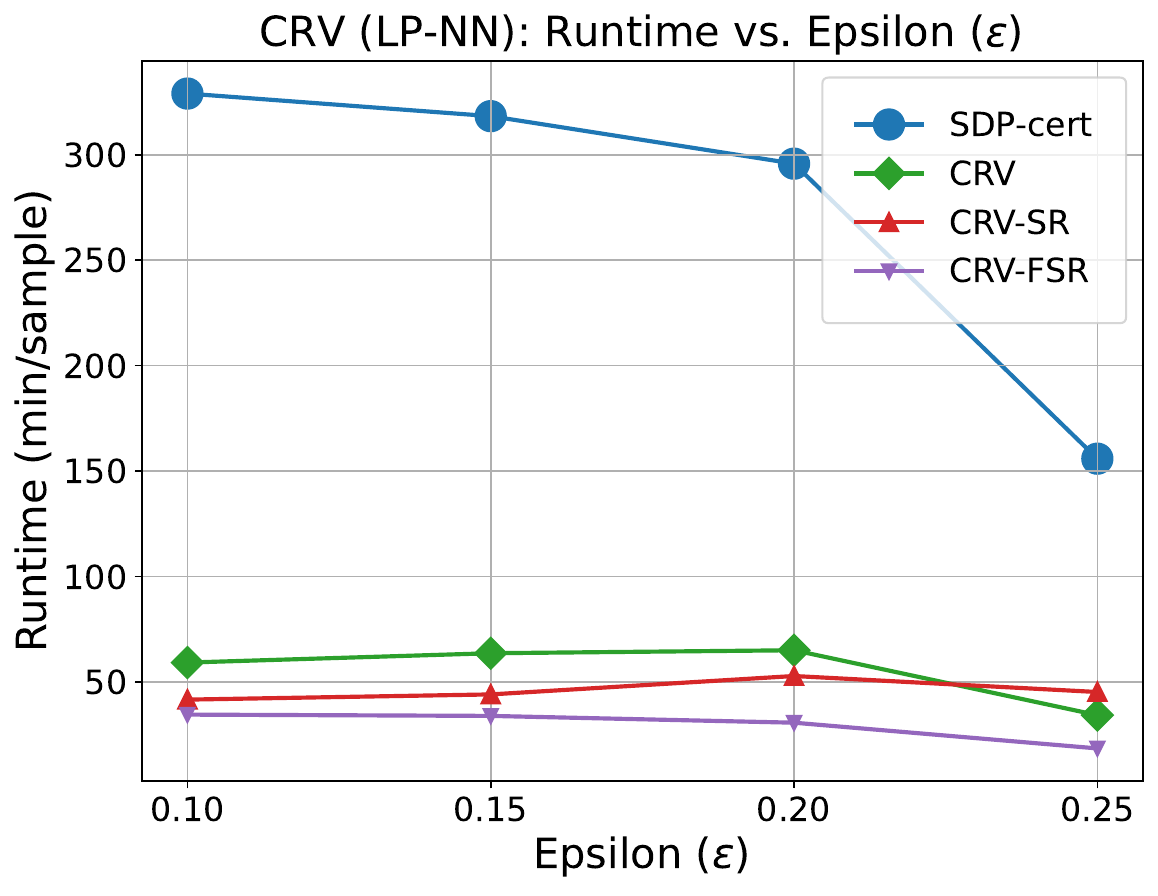} 
        \caption{}
    \end{subfigure} 
    \begin{subfigure}{0.328\textwidth}
        \centering 
        \includegraphics[width=\textwidth]{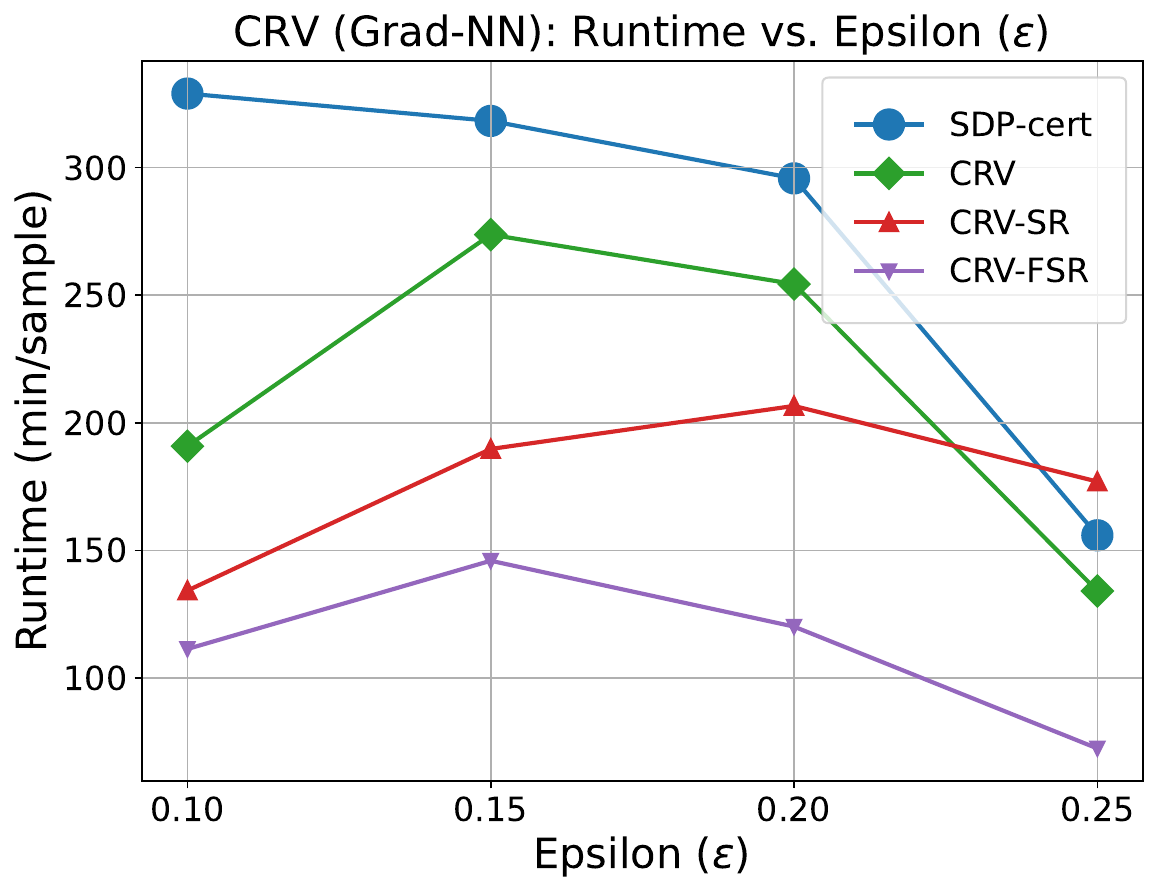} 
        \caption{}
    \end{subfigure} 
    \begin{subfigure}{0.328\textwidth}
        \centering 
        \includegraphics[width=\textwidth]{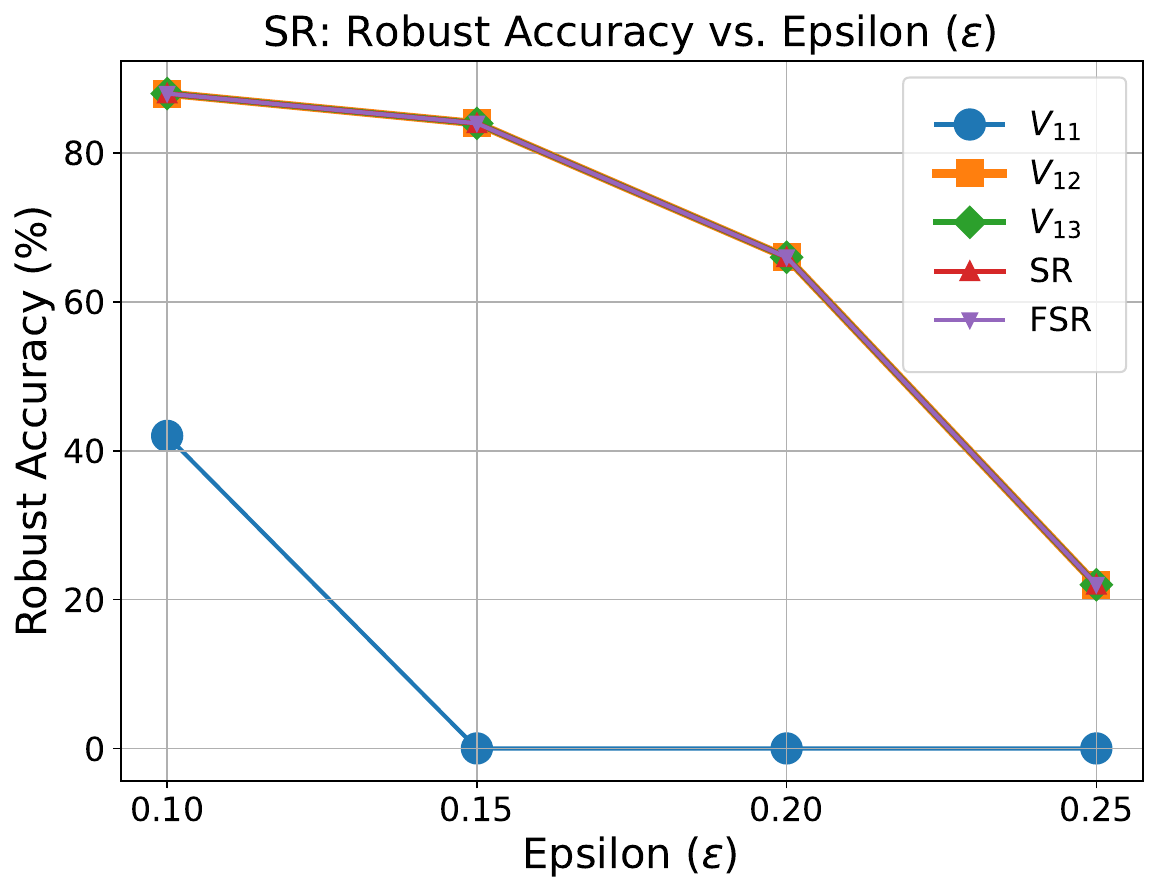}
        \caption{} 
    \end{subfigure} 
    \begin{subfigure}{0.328\textwidth}
        \centering 
        \includegraphics[width=\textwidth]{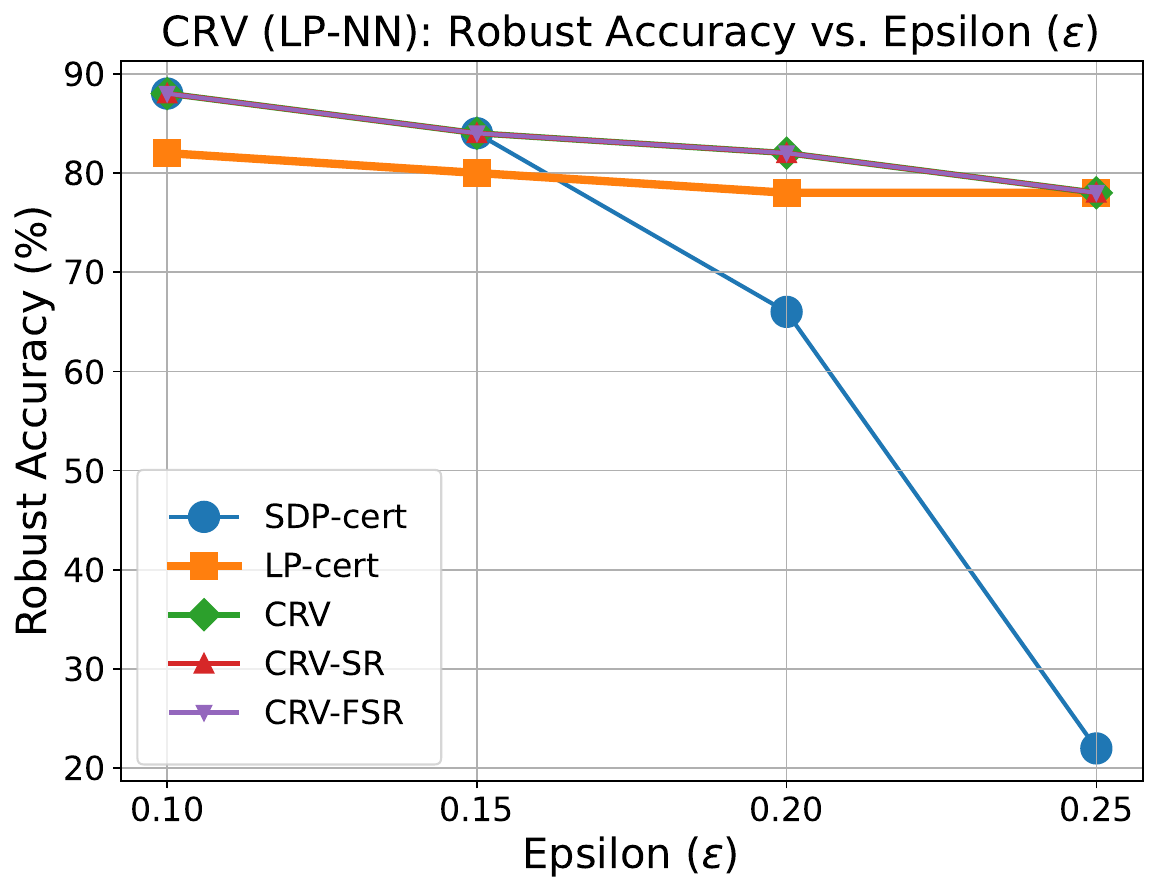} 
        \caption{} 
    \end{subfigure}
    \begin{subfigure}{0.328\textwidth}
        \centering 
        \includegraphics[width=\textwidth]{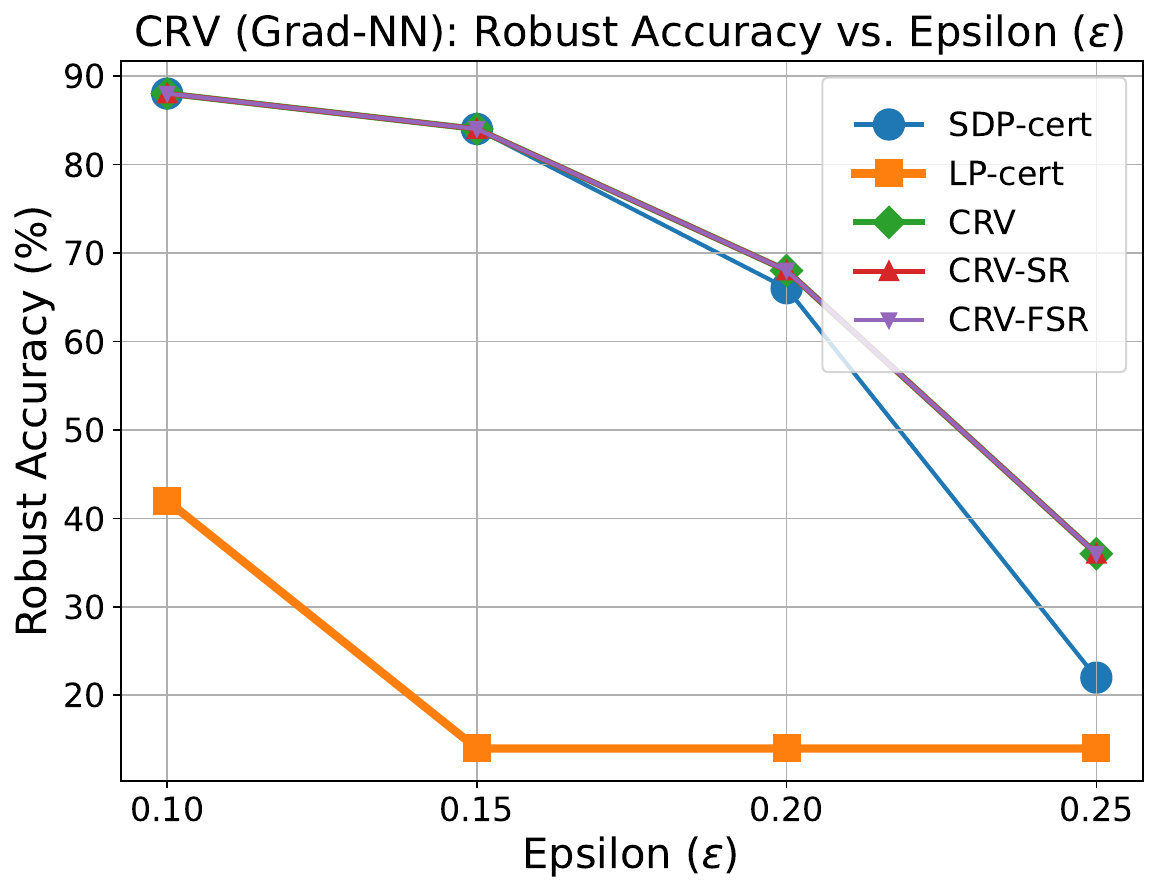} 
        \caption{}
    \end{subfigure}
    \caption{Certified runtime (top row) and robust accuracy (bottom row) across perturbation levels \(\varepsilon \in \{0.1, 0.15, 0.2, 0.25\}\) on Grad-NN and LP-NN. Each column shows: (a, d) SR on Grad-NN, (b, e) LP-NN, and (c, f) Grad-NN under different verification methods. CRV variants achieve robustness comparable to SDP-cert with substantially lower runtime. LP-cert is faster but less reliable on Grad-NN.\\
    \textbf{Note:} In SR, \(V_{13}\) corresponds to the original SDP-cert \cite{raghunathan2018semidefinite} formulation using all constraints. \(V_{12}\) is obtained by removing one linear constraint and the quadratic constraint. \(V_{11}\) further removes one additional constraint from \(V_{12}\), yielding the loosest variant.}
    \label{fig:epsilon}
\end{figure*}
\noindent\textbf{Effect of Perturbation Level \(\varepsilon\).} 
The perturbation budget \(\varepsilon\) specifies the radius of the \(\ell_p\)-ball defining worst-case, task- and dataset-dependent input perturbations; throughout this paper, we focus on certifying robustness for fixed \(\varepsilon\) values under the \(\ell_\infty\) norm.
As the perturbation level increases, robustness certification becomes more difficult: larger \(\varepsilon\) values bring inputs closer to decision boundaries, making loose relaxations more prone to false negatives. Fig.~\ref{fig:epsilon} visualizes this trend. The figure illustrates a general trend as to how SR incrementally applies increasingly tighter submethods, achieving the same robust accuracy as the tightest verifier (e.g., $88\%$ at $\varepsilon=0.1$; see Fig.~\ref{fig:epsilon}d) while reducing runtime (from $328.98$ to $231.43$ min; see Fig.~\ref{fig:epsilon}a), and how FSR further accelerates this process (down to $191.96$ min).

Across all \(\varepsilon \in \{0.1, 0.15, 0.2, 0.25\}\), CRV and its variants consistently maintain high RA while improving runtime. For example, at \(\varepsilon = 0.2\), CRV-FSR achieves \(68\%\) RA on Grad-NN with a runtime of only 120.11 minutes per sample, nearly halving the cost of SDP-cert (295.75 min) while improving the robustness (see Fig.~\ref{fig:epsilon}c,f). On LP-NN, CRV-FSR certifies \(82\%\) of inputs with an \textbf{89.61\%} speedup (see Fig.~\ref{fig:epsilon}b,e). Even at \(\varepsilon = 0.25\), where certification is particularly challenging and most methods fail, CRV-FSR still certifies \(36\%\) of Grad-NN inputs and \(78\%\) of LP-NN inputs with substantial speed gains. 
These results demonstrate that CRV adapts effectively to increasing perturbation levels, thereby maintaining both efficiency and RA. 

As the perturbation level \(\varepsilon\) increases, CRV's average runtime exhibits a non-monotonic behavior (Fig.~\ref{fig:epsilon}c). At intermediate values of \(\varepsilon\) (e.g., \(0.15\)) more inputs fall into a borderline regime where the fast, coarse verifier (LP-cert) fails to certify them, and they must be escalated to tighter, costlier verifiers (e.g., SDP-cert), increasing the mean cost. At larger \(\varepsilon\) (\(0.25\)), a sizable fraction of inputs become clearly non-robust and are rejected early (verification terminates as soon as any adversarial class succeeds), which reduces the number of expensive verifications and thus lowers the average runtime. In short, CRV adaptively concentrates expensive computation where it is most needed, yielding favorable efficiency across perturbation levels.

\noindent\textbf{Empirical Validation of Adversarial Risk.} 
Since computing the exact worst-case adversarial error is intractable, we approximate it using the PGD attack, following the setup in~\cite{raghunathan2018semidefinite}. For $\varepsilon = 0.1$, PGD succeeds on 12\% of LP-NN and 10\% of Grad-NN inputs, both within the CRV upper bound of 12\% potential non-robust instances. The gap between PGD success and SDP-cert’s certification results stems from inputs that are neither certifiably robust nor adversarial under PGD, highlighting the presence of false negatives or borderline cases. The success rates of PGD attacks for $\varepsilon = \{0.15, 0.2, 0.25\}$ are summarized in Table~\ref{tab:CRV-epsilon} (Appendix \ref{sec:appendix-experiemt-eps}).

\noindent\textbf{Verifier Selection.} While CRV can straightforwardly incorporate any post-training verifiers, we intentionally prioritized verifiers and ablations that directly probe verifier–training alignment. The reason is conceptual: most leading verifiers are designed for post hoc certification and do not participate in the training loop that produces alignment bias. Including them in our experiments would therefore primarily confirm the formal guarantee of Theorem~\ref{proof-theorem-crv} (that CRV's certified robust accuracy is at least that of the best included verifier) rather than advance our core goal of studying and mitigating misalignment error. Embedding advanced verifier formulations into the training objective is both promising and orthogonal to this work; thus, it is left for future research. For empirical evaluation, we therefore focus on comparisons and ablations that illuminate CRV’s ability to reduce misalignment bias and tighten the TRA interval, while reporting runtime and robust accuracy trade-offs.

\section{Conclusion}\label{sec:conclusion}
This work highlights key limitations in current robustness measurements for neural networks, particularly the impact of false negatives arising from incomplete methods and the misalignment between training and verification procedures. We proposed CRV, a model-agnostic, multi-step verification framework that integrates multiple verifiers with progressively tighter bounds. CRV-SR improves both the robust accuracy and efficiency of robustness assessment across diverse networks and verification techniques, while maintaining low computational overhead. CRV-FSR further reduces verification time by skipping constraints that yield negligible bound improvements within each verifier.

Our optimized framework for robustness verification builds on existing methods, leaving room for future work to refine its individual components. Targeted improvements to key submodules can further tighten the TRA interval while managing computational complexity. Moreover, the framework can be tailored to specific applications, enabling a balance between robust accuracy and efficiency that reflects the demands of different use cases.

\section*{Acknowledgment}
This research was undertaken thanks in part to funding from the Canada First Research Excellence Fund at Toronto Metropolitan University and Natural Sciences and Engineering Research Council of Canada (NSERC) Discovery grants (\#348100).
\bibliographystyle{IEEEtran}
\bibliography{references}

@article{pelekis2025adversarial,
  title={Adversarial machine learning: a review of methods, tools, and critical industry sectors},
  author={Pelekis, Sotiris and Koutroubas, Thanos and Blika, Afroditi and Berdelis, Anastasis and Karakolis, Evangelos and Ntanos, Christos and Spiliotis, Evangelos and Askounis, Dimitris},
  journal={Artificial Intelligence Review},
  volume={58},
  number={8},
  pages={226},
  year={2025},
  publisher={Springer}
}

@inproceedings{
    madry2017towards,
    title={Towards Deep Learning Models Resistant to Adversarial Attacks},
    author={Aleksander Madry and Aleksandar Makelov and Ludwig Schmidt and Dimitris Tsipras and Adrian Vladu},
    booktitle={International Conference on Learning Representations},
    year={2018},
}

@inproceedings{wong2018provable,
  title={Provable Defenses against Adversarial Examples via the Convex Outer Adversarial Polytope},
  author={Wong, Eric and Kolter, Zico},
  booktitle={International Conference on Machine Learning},
  pages={5286--5295},
  year={2018}
}

@article{raghunathan2018semidefinite,
  title={Semidefinite relaxations for certifying robustness to adversarial examples},
  author={Raghunathan, Aditi and Steinhardt, Jacob and Liang, Percy S},
  journal={Advances in neural information processing systems},
  volume={31},
  year={2018}
}

@inproceedings{NEURIPS2024_f21a76d6,
 author = {Rekavandi, Aref Miri and Farokhi, Farhad and Ohrimenko, Olga and Rubinstein, Benjamin I.P.},
 booktitle = {Advances in Neural Information Processing Systems},
 pages = {134127--134150},
 title = {Certified Adversarial Robustness via Randomized $\alpha$-Smoothing for Regression Models},
 volume = {37},
 year = {2024}
}

@inproceedings{li2023sok,
  title={Sok: Certified robustness for deep neural networks},
  author={Li, Linyi and Xie, Tao and Li, Bo},
  booktitle={2023 IEEE symposium on security and privacy (SP)},
  pages={1289--1310},
  year={2023},
  organization={IEEE}
}

@inproceedings{cao2021invisible,
  title={Invisible for both camera and lidar: Security of multi-sensor fusion based perception in autonomous driving under physical-world attacks},
  author={Cao, Yulong and Wang, Ningfei and Xiao, Chaowei and Yang, Dawei and Fang, Jin and Yang, Ruigang and Chen, Qi Alfred and Liu, Mingyan and Li, Bo},
  booktitle={2021 IEEE Symposium on Security and Privacy (SP)},
  pages={176--194},
  year={2021},
  organization={IEEE}
}

@inproceedings{croce2020reliable,
  title={Reliable evaluation of adversarial robustness with an ensemble of diverse parameter-free attacks},
  author={Croce, Francesco and Hein, Matthias},
  booktitle={International conference on machine learning},
  pages={2206--2216},
  year={2020},
  organization={PMLR}
}

@article{bastani2016measuring,
  title={Measuring neural net robustness with constraints},
  author={Bastani, Osbert and Ioannou, Yani and Lampropoulos, Leonidas and Vytiniotis, Dimitrios and Nori, Aditya and Criminisi, Antonio},
  journal={Advances in neural information processing systems},
  volume={29},
  year={2016}
}

@article{pulina2012challenging,
  title={Challenging SMT solvers to verify neural networks},
  author={Pulina, Luca and Tacchella, Armando},
  journal={Ai Communications},
  volume={25},
  number={2},
  pages={117--135},
  year={2012},
  publisher={IOS Press}
}

@article{fazlyab2020safety,
  title={Safety verification and robustness analysis of neural networks via quadratic constraints and semidefinite programming},
  author={Fazlyab, Mahyar and Morari, Manfred and Pappas, George J},
  journal={IEEE Transactions on Automatic Control},
  year={2020},
  publisher={IEEE}
}

@inproceedings{wong2018scaling,
  title={Scaling provable adversarial defenses},
  author={Wong, Eric and Schmidt, Frank and Metzen, Jan Hendrik and Kolter, J Zico},
  booktitle={Advances in Neural Information Processing Systems},
  pages={8400--8409},
  year={2018}
}

@inproceedings{dathathri2020enabling,
 author = {Dathathri, Sumanth and Dvijotham, Krishnamurthy and Kurakin, Alexey and Raghunathan, Aditi and Uesato, Jonathan and Bunel, Rudy R and Shankar, Shreya and Steinhardt, Jacob and Goodfellow, Ian and Liang, Percy S and Kohli, Pushmeet},
 booktitle = {Advances in Neural Information Processing Systems},
 editor = {H. Larochelle and M. Ranzato and R. Hadsell and M. F. Balcan and H. Lin},
 pages = {5318--5331},
 title = {Enabling certification of verification-agnostic networks via memory-efficient semidefinite programming},
 volume = {33},
 year = {2020}
}

@inproceedings{lan2022tight,
  title={Tight neural network verification via semidefinite relaxations and linear reformulations},
  author={Lan, Jianglin and Zheng, Yang and Lomuscio, Alessio},
  booktitle={Proceedings of the AAAI Conference on Artificial Intelligence},
  volume={36},
  number={7},
  pages={7272--7280},
  year={2022}
}

@article{wang2021beta,
  title={Beta-crown: Efficient bound propagation with per-neuron split constraints for neural network robustness verification},
  author={Wang, Shiqi and Zhang, Huan and Xu, Kaidi and Lin, Xue and Jana, Suman and Hsieh, Cho-Jui and Kolter, J Zico},
  journal={Advances in neural information processing systems},
  volume={34},
  pages={29909--29921},
  year={2021}
}

@book{boyd2004convex,
  title={Convex optimization},
  author={Boyd, Stephen P and Vandenberghe, Lieven},
  year={2004},
  publisher={Cambridge university press}
}

@article{vandenberghe1996semidefinite,
  title={Semidefinite programming},
  author={Vandenberghe, Lieven and Boyd, Stephen},
  journal={SIAM review},
  volume={38},
  number={1},
  pages={49--95},
  year={1996},
  publisher={SIAM}
}

@article{lasserre2002semidefinite,
  title={Semidefinite programming vs. LP relaxations for polynomial programming},
  author={Lasserre, Jean B},
  journal={Mathematics of operations research},
  volume={27},
  number={2},
  pages={347--360},
  year={2002},
  publisher={INFORMS}
}

@article{lasserre2004sdp,
  title={SDP vs. LP relaxations for the moment approach in some performance evaluation problems},
  author={Lasserre, Jean-Bernard and Prieto-Rumeau, Tom{\'a}s},
  year={2004},
  publisher={Taylor \& Francis}
}

@book{de2006aspects,
  title={Aspects of semidefinite programming: interior point algorithms and selected applications},
  author={De Klerk, Etienne},
  volume={65},
  year={2006},
  publisher={Springer Science \& Business Media}
}

@article{yurtsever2021scalable,
  title={Scalable semidefinite programming},
  author={Yurtsever, Alp and Tropp, Joel A and Fercoq, Olivier and Udell, Madeleine and Cevher, Volkan},
  journal={SIAM Journal on Mathematics of Data Science},
  volume={3},
  number={1},
  pages={171--200},
  year={2021},
  publisher={SIAM}
}

@inproceedings{lofberg2004yalmip,
  title={YALMIP: A toolbox for modeling and optimization in MATLAB},
  author={Lofberg, Johan},
  booktitle={2004 IEEE international conference on robotics and automation (IEEE Cat. No. 04CH37508)},
  pages={284--289},
  year={2004},
  organization={IEEE}
}

@article{ma2024relu,
  title={RELU hull approximation},
  author={Ma, Zhongkui and Li, Jiaying and Bai, Guangdong},
  journal={Proceedings of the ACM on Programming Languages},
  volume={8},
  number={POPL},
  pages={2260--2287},
  year={2024},
  publisher={ACM New York, NY, USA}
}

@article{chiu2025sdp,
  title={SDP-CROWN: Efficient Bound Propagation for Neural Network Verification with Tightness of Semidefinite Programming},
  author={Chiu, Hong-Ming and Chen, Hao and Zhang, Huan and Zhang, Richard Y},
  journal={arXiv preprint arXiv:2506.06665},
  year={2025}
}

@article{zhang2022general,
  title={General cutting planes for bound-propagation-based neural network verification},
  author={Zhang, Huan and Wang, Shiqi and Xu, Kaidi and Li, Linyi and Li, Bo and Jana, Suman and Hsieh, Cho-Jui and Kolter, J Zico},
  journal={Advances in neural information processing systems},
  volume={35},
  pages={1656--1670},
  year={2022}
}

@article{brix2024fifth,
  title={The fifth international verification of neural networks competition (vnn-comp 2024): Summary and results},
  author={Brix, Christopher and Bak, Stanley and Johnson, Taylor T and Wu, Haoze},
  journal={arXiv preprint arXiv:2412.19985},
  year={2024}
}

@article{yang2020closer,
  title={A closer look at accuracy vs. robustness},
  author={Yang, Yao-Yuan and Rashtchian, Cyrus and Zhang, Hongyang and Salakhutdinov, Russ R and Chaudhuri, Kamalika},
  journal={Advances in neural information processing systems},
  volume={33},
  pages={8588--8601},
  year={2020}
}
\balance
\newpage
\appendix
\section*{Supplementary Materials}
\subsection{Proof of Proposition~\ref{proposition-1}}\label{sec:proposition-1}
\begin{proof}
Let \( f(x) \) be the objective function for both optimization problems \(A\) and \(B\), with feasible sets \(\mathcal{F}_A\) and \(\mathcal{F}_B\) satisfying \(\mathcal{F}_B \subseteq \mathcal{F}_A\). Let \( x_B^\star \in \mathcal{F}_B \) be the optimal solution  of the problem \(B\), achieving objective value \( L_B = f(x_B^\star) \). Since \( x_B^\star \in \mathcal{F}_A \) and \( L_A \) is the maximum of \( f(x) \) over \(\mathcal{F}_A\), it follows that \( L_A \geq f(x_B^\star) = L_B \). Thus, \( L_A \geq L_B \).
\end{proof}

\subsection{Proof of Corollary~\ref{corollary-1}}\label{sec:corollary-1}
\begin{proof}
By Proposition~\ref{proposition-1}, for every input, the optimal value under method \(A\) is greater than or equal to that under method \(B\); that is, \(L_A \geq L_B\).

First, consider an input that is a false negative under method \(B\), meaning \(L_B \geq 0\) while \(l^\star(y,y') < 0\). Since \(L_A \geq L_B\), it follows that \(L_A \geq 0\) as well, implying that method \(A\) also fails to certify robustness. Therefore, \(S_{\mathrm{fn}_B} \subseteq S_{\mathrm{fn}_A}\).

Next, consider an input verified as robust under method \(A\), meaning \(L_A < 0\) and \(l^\star(y,y') < 0\). Since \(L_A \geq L_B\), it follows that \(L_B < 0\) as well. Thus, any input certified as robust by method \(A\) is also certified as robust by method \(B\), implying \(S_{\mathrm{tp}_A} \subseteq S_{\mathrm{tp}_B}\).
\end{proof}

\subsection{Supporting Formulas for Corollary~\ref{corollary-1}}\label{sec:limitation}

In order to validate the theoretical results from corollary~\ref{corollary-1}, we select two methods: SDP-cert (proposed in \cite{raghunathan2018semidefinite}) and SDP-cert-loose which we propose. We begin by summarizing the formulation of SDP-cert (from \cite{raghunathan2018semidefinite} and formulate SDP-cert-loose by omitting certain constraints to replicate the setup proposed in Corollary~\ref{corollary-1}. In the last part, we summarize the findings of this experiment.

For an input \( x \in \mathbb{R}^d \) bounded by \( l \leq x \leq u \), the \( \ell_\infty \) perturbation model gives \( l = x - \varepsilon \mathbf{1} \), \( u = x + \varepsilon \mathbf{1} \), where \(\varepsilon\) represents the perturbation level and \(\mathbf{1}\) denote the vector of all ones. The input constraint can be expressed as a quadratic constraint (QC):
\begin{equation}\label{eq:input-relaxation-ps}
    (x - l)(x - u) \leq 0 \Rightarrow x^2 \leq (l + u) x - l u.
\end{equation}
ReLU constraints are captured by:
\begin{equation}\label{eq:relax-relu-ps}
z \geq 0, \quad z \geq x, \quad z(z - x) = 0,
\end{equation}
which together enforce \( z = \max(x, 0) \).

These constraints are lifted to an SDP by introducing the matrix \( P = vv^\top \), where \( v = [1 \; x^\top \; z^\top]^\top \). The resulting SDP relaxation (following~\cite{raghunathan2018semidefinite}) is:
\begin{align}\label{eq:sdp-relax-relu-ps}
	\begin{split}
            f_\text{SDP} = &\underset{P}{\text{max}} \ c^\top P[z] \\
            &\text{s.t.} \quad  P[z] \geq 0, \quad P[z] \geq W P[x]\\
            & \quad \quad\text{diag}(P[zz^\top]) = \text{diag}(WP[xz^\top])\\
            & \quad \quad \text{diag}(P[xx^\top]) \leq (l + u) \odot P[x] - l \odot u \\
            & \quad \quad P[1] = 1, \quad P \succeq 0.
	\end{split}
\end{align}
We refer to this SDP relaxation as \textbf{SDP-cert}. It provides an upper bound to the original non-convex verification problem in~\eqref{eq:verification-method}, allowing certified robustness via convex optimization.

\noindent\textbf{SDP-cert-loose.}
The ReLU constraints are written as a quadratic constraint: 
\begin{equation}\label{eq:relax-relu-1} 
z(z - x) = 0. 
\end{equation} 
The quadratic constraint enforces that \(z\) behaves as an overapproximation for ReLU activation. The input constraint remains unchanged in~\eqref{eq:input-relaxation-ps}. 
To obtain a convex relaxation, we reformulate the problem into an SDP, structurally similar to that proposed in~\cite{raghunathan2018semidefinite}, as follows:
\begin{align}\label{eq:sdp-relax-relu-1}
    \begin{split}
    f_\text{SDP}  = &\underset{P}{\text{max}} \ c^\top P[z] \\
    & \text{s.t.} \quad  \text{diag}(P[zz^\top]) = \text{diag}(WP[xz^\top]) \\
    & \quad \quad\text{diag}(P[xx^\top]) \leq (l + u) \odot P[x] - l \odot u \\
    & \quad \quad P[1] = 1, \quad P \succeq 0.
    \end{split}
\end{align}

\subsection{Fast Stepwise Relaxation (FSR)}\label{sec:appendix-fsr}
To justify the threshold choice in the FSR strategy, we present additional results in Tables~\ref{tab:FSR1}--\ref{tab:FSR3}. Each table reports the Loss, defined as the difference in the certified bound between submethods within the verifier. Across all experiments, this difference remains below 5\%. The selected threshold offers a favorable balance, enabling significant runtime reduction while preserving verification quality. This supports the adoption of early stopping when further tightening offers limited benefit.
\begin{table}[!h]
    \caption{Comparison of \(V_{13}\) and \(V_{12}\) on a single input (true class: 8) at \(\varepsilon = 0.2\).}
  \label{tab:FSR1}
  \centering
  \begin{tabular}{lcccccc}
    \toprule
     & \multicolumn{3}{c}{Optimal Answer} & \multicolumn{3}{c}{Runtime(min)}\\
    \cmidrule(r){1-1}\cmidrule(r){2-4}\cmidrule(r){5-7}
    Adv Class & \(V_{13}\) & \(V_{12}\) & Robust & \(V_{13}\) & \(V_{12}\) & Impv.(\%)\\
    \midrule
        0  & -0.127 & -0.127 & \cmark & 37.03  & 17.70 & 52.19\\
        1 & -0.084 & -0.084 & \cmark & 39.98  & 15.83 & 60.41\\
        2 & 0.264 & 0.264 & \xmark & 38.67 & 17.05 & 55.92\\
        3 & -2.8e-04 & -2.3e-04 & \cmark & 37.04 & 16.60 & 55.18\\
        4 & -0.107 & -0.107 & \cmark & 94.41 & 15.69  & 83.38\\
        5 & -0.313 & -0.313 & \cmark & 31.55 & 13.91 & 55.91\\
        6 & -0.244 & -0.244  & \cmark & 33.44 & 15.11 & 54.814\\
        7 & -0.178 & -0.178 & \cmark & 34.36 & 14.40 & 58.09\\
        9 & -0.124 & -0.124 & \cmark & 38.17  & 14.86 & 61.05\\
    \bottomrule
  \end{tabular}
\end{table}
\begin{table}[h!]
    \caption{Comparison of \(V_{13}\) and \(V_{12}\) on a single input (true class: 8) at \(\varepsilon = 0.1\).}
  \label{tab:FSR2}
  \centering
  \begin{tabular}{lcccccc}
    \toprule
     & \multicolumn{3}{c}{Optimal Answer} & \multicolumn{3}{c}{Runtime(min)}\\
    \cmidrule(r){1-1}\cmidrule(r){2-4}\cmidrule(r){5-7}
    Adv Class & \(V_{13}\) & \(V_{12}\) & Robust & \(V_{13}\) & \(V_{12}\) & Impv.(\%)\\
    \midrule
        0 & -0.587 & -0.587 & \cmark & 37.06 & 16.47 & 55.57\\
		1 & -0.469 & -0.469 & \cmark & 38.09  & 16.90 & 55.64\\
		2 & -0.179 & -0.179 & \cmark & 42.38 & 16.16 & 61.86\\
		3 & -0.436 & -0.436 & \cmark & 39.36 & 16.33 & 58.5\\
		4 & -0.563 & -0.563 & \cmark & 36.98 & 15.31 & 58.61\\
		5 & -0.785 & -0.785 & \cmark & 33.68 & 14.47 & 57.04\\
		6 & -0.696 & -0.696 & \cmark & 37.16 & 16.23 & 56.33\\
		7 & -0.573 & -0.572 & \cmark & 37.02 & 15.16 & 59.036\\
		9 & -0.589 & -0.589 & \cmark & 36.32 & 16.70 & 54.02\\
    \bottomrule
  \end{tabular}
\end{table}

\begin{table}[t!]
    \caption{Comparison of \(V_{13}\) and \(V_{12}\) on ten inputs (true class: 8) at \(\varepsilon = 0.2\), reporting mean \(\pm\) standard deviation.}
    \label{tab:FSR4}
    \centering
    \resizebox{0.38\textheight}{!}{
    \begin{tabular}{lcccc}
    \toprule
     & \multicolumn{1}{c}{Optimal Answer} & \multicolumn{2}{c}{Runtime(sec)}\\
    \cmidrule(r){1-1}\cmidrule(r){2-2}\cmidrule(r){3-5}
    Adv. Class& RA (\%) & \(V_{13}\) & \(V_{12}\) & Impv.(\%)\\
    \midrule
        0 & 77.78 & 2564.43 $\pm$ 53.03 & 1065.65 $\pm$ 28.19 & 58.44\\
		1 & 88.89 & 2456.32 $\pm$ 77.18 & 1063.89 $\pm$ 37.15 & 56.69\\
		2 & 77.78 & 2496.45 $\pm$ 100.90 & 1048.79 $\pm$ 22.62 & 57.99\\
		3 & 77.78 & 2500.48 $\pm$ 96.30 & 1063.72 $\pm$ 41.93 & 57.46\\
		4 & 77.78 & 2407.48 $\pm$ 62.24& 1039.28 $\pm$ 27.59 & 56.83\\
		5 & 66.67 & 2518.46 $\pm$ 111.58 & 1065.92 $\pm$ 49.64 & 57.68\\
		6 & 77.78 & 2401.30 $\pm$ 82.29& 1043.59 $\pm$ 30.04 & 56.54\\
		7 & 66.67 & 2478.21 $\pm$ 127.22& 1067.22$\pm$ 50.30 & 56.93\\
		9 & 88.89 & 2574.17 $\pm$ 100.93 & 1082.72$\pm$ 42.65 & 57.94\\
    \bottomrule
    \end{tabular}}
\end{table}

\begin{table}[h!]
    \caption{Comparison of \(V_{13}\) and \(V_{12}\) on a single input (true class: 0) at \(\varepsilon = 0.1\).}
  \label{tab:FSR3}
  \centering
  \begin{tabular}{lcccccc}
    \toprule
    True Class-0 & \multicolumn{3}{c}{Optimal Answer} & \multicolumn{3}{c}{Runtime(min)}\\
    \cmidrule(r){1-1}\cmidrule(r){2-4}\cmidrule(r){5-7}
    Adv Class & \(V_{13}\) & \(V_{12}\) & Robust & \(V_{13}\) & \(V_{12}\) & Impv.(\%)\\
    \midrule
        1 & -1.219 & -1.219 & \cmark & 36.19 & 16.71 & 53.82\\
		2 & -1.073 & -1.073 & \cmark & 37.05 & 16.15 & 56.4\\
		3 & -1.171 & -1.171 & \cmark & 39.14 & 18.57 & 52.54\\
		4  & -1.026 & -1.026 & \cmark & 34.82 & 20.14 &  42.15\\
		5  & -0.945 & -0.945 & \cmark & 44.62 & 18.44 & 58.68\\
		6  & -1.048 & -1.048 & \cmark & 38.64 & 21.07 &  45.47\\
		7  & -1.232 & -1.232 & \cmark & 35.12 & 18.38 & 47.65\\
		8  & -1.103 & -1.103 & \cmark & 36.92 & 20.52 & 44.41\\
		9 & -1.626 & -1.626 & \cmark & 42.66 & 16.67 & 60.93\\
    \bottomrule
  \end{tabular}
\end{table}
\subsection{Additional Results}\label{sec:appendix-experiemt-eps}
 Tables~\ref{tab:SWR-epsilon} and~\ref{tab:CRV-epsilon} present detailed certification results for varying perturbation budgets (\(\varepsilon = \{0.15, 0.2, 0.25\}\)) on MNIST. We report robust accuracy (RA), average runtime per sample, and speedup relative to baseline SDP-based methods. These results confirm that CRV-SR and CRV-FSR consistently maintain high robust accuracy while achieving substantial runtime reductions across different verification settings.
\begin{table}[!h]
  \caption{Certified robustness on MNIST under \(l_{\infty}\) perturbations at \(\varepsilon = \{0.15, 0.2, 0.25\}\). RA, average runtime per sample, and speedup (relative to \(V_{13}\)) are reported.}
  \label{tab:SWR-epsilon}
  \centering
  \begin{tabular}{llccc}
    \toprule
    \( \varepsilon \) & Verifier & RA & Runtime (min) & Speedup\\
    \midrule
    \multirow{5}{*}{0.15} 
        & \(V_{11}\) & 0\% & 25.57 & -\\
        & \(V_{12}\) & 84\% & 144.11 & -\\
        & \(V_{13}\)~\cite{raghunathan2018semidefinite} & 84\% & 318.27 & -\\
        & SR & 84\% & 220.60 & 30.69\%\\
        & \textbf{FSR} & \textbf{84\%} & \textbf{169.68} & \textbf{46.69\%}\\
    \midrule
    \multirow{5}{*}{0.2} 
        & \(V_{11}\) & 0\% & 18.26 & -\\
        & \(V_{12}\) & 66\% & 121.40 & -\\
        & \(V_{13}\)~\cite{raghunathan2018semidefinite} & 66\% & 295.75 & -\\
        & SR &  66\% & 240.22 & 18.78\%\\
        & \textbf{FSR} &  \textbf{66\%} & \textbf{139.66} & \textbf{52.78\%}\\
    \midrule
    \multirow{5}{*}{0.25}
        & \(V_{11}\) & 0\% & 17.92 & -\\
        & \(V_{12}\) & 22\% & 66.24 & -\\
        & \(V_{13}\)~\cite{raghunathan2018semidefinite} & 22\% & 155.89 & -\\
        & SR & 22\% & 205.75 & 31.98\%\\
        & \textbf{FSR} & \textbf{22\%} & \textbf{84.16} & \textbf{46.01\%}\\
    \bottomrule
  \end{tabular}
\end{table}
\begin{table}[!t]
  \caption{Certification results on MNIST under \(l_{\infty}\)-norm attacks at \(\varepsilon = \{0.15, 0.2, 0.25\}\). RA, average runtime per sample, and speedup (relative to SDP-cert) are reported. LP-cert is fast and, since its runtime (per sample) is significantly lower than SDP-based methods and not reported in prior work~\cite{wong2018scaling, dathathri2020enabling, li2023sok}, we omit it from the runtime and speedup analysis.}
  \label{tab:CRV-epsilon}
  \centering
  \resizebox{0.48\textwidth}{!}{
  \begin{tabular}{llcccccc}
    \toprule
     &  & \multicolumn{3}{c}{Grad-NN} & \multicolumn{3}{c}{LP-NN}\\
    \cmidrule(r){1-1} \cmidrule(r){2-2} \cmidrule(r){3-5} \cmidrule(r){6-8}
    \(\varepsilon\) & Verifier & RA & Runtime (min) & Speedup & RA & Runtime (min) & Speedup\\
    \midrule
        \multirow{5}{*}{0.15}
        & SDP-cert~\cite{raghunathan2018semidefinite}  & 84\% & 318.27 & - & 84\% & 318.27 & - \\
        & LP-cert~\cite{wong2018provable}  & 14\% & - & - & 80\% & - & - \\
        & CRV  & 84\% & 273.71 & 14.00\% & 84\% & 63.65 & 80.00\% \\
        & CRV-SR & 84\% & 189.72 & 40.39\% & 84\% & 44.12 & 86.14\% \\
        & \textbf{CRV-FSR} & \textbf{84\%} & \textbf{145.92} & \textbf{54.15\%} & \textbf{84\%} & \textbf{33.94} & \textbf{89.34\%} \\
        & PGD Success
        & 16\%  & - & - & 12\%  & - & - \\
        \midrule
        \multirow{5}{*}{0.2}
        & SDP-cert~\cite{raghunathan2018semidefinite}  & 66\% & 295.75 & - & 66\% & 295.75 & - \\
        & LP-cert~\cite{wong2018provable}  & 14\% & - & - & 78\% & - & - \\
        & CRV  & 68\% & 254.34 & 14.00\% & 82\% & 65.06 & 78.00\% \\
        & CRV-SR & 68\% & 206.59 & 30.15\% & 82\% & 52.85 & 82.13\% \\
        & \textbf{CRV-FSR} & \textbf{68\%} & \textbf{120.11} & \textbf{59.39\%} & \textbf{82\%} & \textbf{30.72} & \textbf{89.61\%} \\
        & PGD Success
        & 28\%  & - & - & 16\%  & - & - \\
        \midrule
        \multirow{5}{*}{0.25}
        & SDP-cert~\cite{raghunathan2018semidefinite}  & 22\% & 155.89 & - & 22\% & 155.89 & - \\
        & LP-cert~\cite{wong2018provable}  & 14\% & - & - & 78\% & - & - \\
        & CRV & 36\% & 134.06 & 8.23\% & 78\% & 34.30 & 78.00\% \\
        & CRV-SR & 36\% & 176.94 & -13.50\% & 78\% & 45.26 & 70.97\% \\
        & \textbf{CRV-FSR} & \textbf{36\%} & \textbf{72.38} & \textbf{53.57\%} & \textbf{78\%} & \textbf{18.52} & \textbf{88.12\%} \\
        & PGD Success
        & 58\%  & -      & -     & 22\%  & -      & -     \\
        \bottomrule
    \end{tabular}}
\end{table}

\end{document}